\def\eqref#1{equation~\ref{#1}}
\def\1{\bm{1}}
\DeclareMathAlphabet{\mathsfit}{\encodingdefault}{\sfdefault}{m}{sl}
\SetMathAlphabet{\mathsfit}{bold}{\encodingdefault}{\sfdefault}{bx}{n}
\DeclareMathOperator*{\argmin}{arg\,min}
\theoremstyle{plain}
\newtheorem{theorem}{Theorem}[section]
\newtheorem{proposition}[theorem]{Proposition}
\newtheorem{lemma}[theorem]{Lemma}
\title{RL's Razor: Why Online Reinforcement Learning Forgets Less}
\author{
Idan Shenfeld$^{*}$ \quad Jyothish Pari$^{*}$ \quad Pulkit Agrawal \\
\;Improbable AI Lab, MIT \\
\texttt{\{idanshen, jyop, pulkitag\}@mit.edu} \\
}
\begin{document}

\maketitle

\begin{abstract}
Comparison of fine-tuning models with reinforcement learning (RL) and supervised fine-tuning (SFT) reveals that, despite similar performance at a new task, RL preserves prior knowledge and capabilities significantly better. We find that the degree of forgetting is determined by the distributional shift, measured as the KL-divergence between the fine-tuned and base policy evaluated on the new task. Our analysis reveals that on-policy RL is implicitly biased towards KL-minimal solutions among the many that solve the new task, whereas SFT can converge to distributions arbitrarily far from the base model. We validate these findings through experiments with large language models and robotic foundation models and further provide theoretical justification for why on-policy RL updates lead to a smaller KL change. We term this principle \textit{RL’s Razor}: among all ways to solve a new task, RL prefers those closest in KL to the original model. Our website is available at \url{http://jyopari.github.io/posts/rl_razor}.
\end{abstract}

\begin{figure}[h]
    \centering
    \includegraphics[width=1\linewidth]{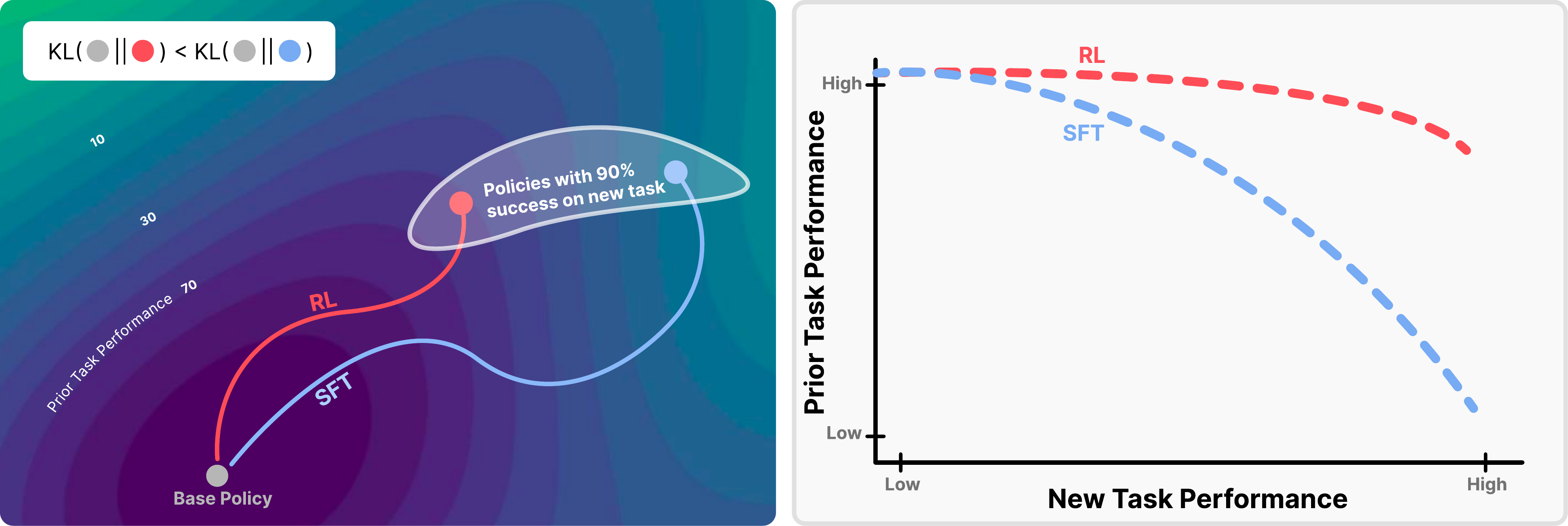}
    \caption{\textbf{Bias toward KL-minimal solutions reduces forgetting.} \emph{Left:} Among policies that solve the new task, RL converges to those closest in KL to the base model. \emph{Right:} This KL bias yields higher prior-task retention at matched new-task performance compared to SFT.}
    \label{fig:teaser}
\end{figure}

\section{Introduction}
Foundation models have rapidly become the backbone of modern AI, powering applications in language, vision, robotics, and beyond. Despite their remarkable capabilities, today’s models are largely \emph{static} once deployed: they excel at tasks learned during pre-training or post-training, but are not designed to self-improve and continually acquire new capabilities. We imagine a future where deployed models are long-lived \emph{agents} assisting humans in the long-term and continuously adapting to new needs. As such, models must improve and adapt to new data, environments, and objectives \cite{gao2025survey, dao2025rezero, Moradi2025ContinuousSO, Li2025C3POCC, simonds2025ladder, Zweiger2025SelfAdaptingLM}.

A central challenge to this vision is \emph{catastrophic forgetting}---the tendency for models to lose previously acquired capabilities when trained on new tasks \cite{mccloskey1989catastrophic, french1999catastrophic, kirkpatrick2017overcoming, luo2023empirical}.  
Although scaling model size and pre-training data improves robustness \cite{ramasesh2021effect, luo2023empirical, cossu2024continual}, catastrophic forgetting remains a persistent obstacle, undermining the promise of continual improvement \cite{bommasani2021opportunities, guo2025comprehensive,Zweiger2025SelfAdaptingLM}. To enable foundation models to serve as long-term agents, we need to develop post-training methods that allow models to acquire new skills without erasing old ones. 

To further this goal, we analyze the performance of two widely used post-training schemes of supervised fine-tuning (SFT) and reinforcement learning (RL). Our experiments reveal a surprising finding: even when SFT and RL achieve the same performance on the new task, we observe that \textbf{SFT often achieves new-task gains by erasing prior knowledge, while RL better preserves old skills}. Figure \ref{fig:teaser} (right) illustrates this tradeoff: although both methods can reach high performance on the new task, RL maintains substantially higher performance on prior tasks compared to SFT.

This striking empirical gap raises the question: what underlying mechanism allows RL to improve on new tasks, but unlike SFT, minimally impacts the model's prior knowledge?

Previous approaches to catastrophic forgetting targeted specific factors such as constraining weight updates \citep{kirkpatrick2017overcoming, aljundi2018memory, zenke2017continual}, preserving learned features \citep{rannen2017encoder, hou2019learning}, or regularizing shift in output distribution \citep{li2017learning, stiennon2020learning}. While these methods can reduce forgetting, they focus on its effects rather than its underlying cause. Consequently, it remains unclear what truly governs forgetting or why different training algorithms behave so differently.
Some prior work claimed that forgetting can be determined by how much the model’s distribution shifts on past tasks \citep{rebuffi2017icarl, castro2018end, chaudhry2018riemannian, wu2019large}. Yet in practice, this is infeasible to measure in foundation models, where the set of prior tasks is vast or even unbounded.
To search for a more useful principle, we systematically ablated many candidate variables. Surprisingly, we find that forgetting can instead be predicted using only the \emph{new} task distribution. Specifically, we uncover an \textit{empirical forgetting law}: \textbf{When fine-tuning a model $\boldsymbol{\pi}$ on a new task $\boldsymbol{\tau}$, the degree of forgetting is accurately predicted by $\boldsymbol{\mathbb{E}_{x\sim\tau} \big[ \text{KL}(\pi_0 || \pi) \big]}$}, the KL divergence between the fine-tuned and base policy evaluated on the new task.
This law is practically useful since it can be measured, and even influenced, during fine-tuning, without requiring access to past-task data.
Although the mechanism remains to be fully understood, the consistency of this law across models and domains suggests it reflects a fundamental property of forgetting.

This law also clarifies the surprising difference between SFT and RL. Our analysis reveals a simple but powerful principle we call \textbf{\emph{RL’s Razor}: among the many high-reward solutions for a new task, on-policy methods such as RL are inherently biased toward solutions that remain closer to the original policy in KL divergence}. Figure \ref{fig:teaser} (left) highlights this effect: among the many policies that reach a high success rate on the new task, RL is biased toward KL-minimal solutions, while SFT can converge to distant ones. 
This bias arises directly from RL’s \emph{on-policy training}: by sampling from the model’s own distribution at every step, RL constrains learning to outputs already given non-negligible probability by the base model. To improve reward, these samples are reweighted and used to update the model, which gradually shifts the policy rather than pulling it toward an arbitrary distribution. Thus, when multiple equally good solutions exist for a new task, RL tends to find solutions close to the original policy, while SFT can converge to solutions much farther away, depending on the provided labels. Theoretical analysis in a simplified setting confirms this view, showing that policy gradient methods converge to KL-minimal solutions even without explicit regularization.

Finally, to validate the KL hypothesis, we construct an “oracle SFT” distribution that provably minimizes KL divergence while achieving perfect accuracy. Training on this oracle distribution produces even less forgetting than RL itself. This demonstrates that RL’s advantage does not stem from being inherently different, but from its implicit KL minimization. Whenever training is biased toward KL-minimal solutions, forgetting is reduced.

Our main contributions are:
\begin{itemize}[leftmargin=*]
    \item We show that RL fine-tuning forgets less than SFT, even when both reach the same performance on new tasks.
    \item We uncover an empirical forgetting law:  the KL divergence to the base policy, measured on the new task, as a strong predictor of catastrophic forgetting across objectives and hyperparameters.
    \item We provide empirical and theoretical evidence that the on-policy nature of policy gradient methods leads to smaller KL shifts and explains RL’s advantage.
\end{itemize}


Together, these findings suggest a new perspective on post-training: to achieve continual adaptation without forgetting, algorithms should explicitly aim to minimize KL divergence from the base model. This principle opens the door to designing future training methods that combine RL’s ability to preserve prior knowledge with the efficiency of SFT, enabling foundation models that can truly \emph{learn for life}.

\section{Related work}
\paragraph{Foundation Models and Post-training} 
In modern deep learning, large-scale models pre-trained on broad, diverse datasets (usually termed Foundation models) serve as general-purpose backbones~\citep{radford2021learning, achiam2023gpt,touvron2023llama, hu2023toward, li2024multimodal} with broad domain knowledge and some zero-shot learning abilities~\citep{radford2018improving, brown2020language}. 
However, pre-trained models may not directly meet the requirements of specific applications or align with domain-specific constraints. Post-training methods address this gap by adapting foundation models to downstream tasks through supervised fine-tuning on curated datasets \citep{howard2018universal, dodge2020fine, wei2021finetuned,chung2024scaling}, reinforcement learning from human or automated feedback \citep{ziegler2019fine, ouyang2022training, guo2025deepseek, zhai2024fine}, and other techniques \citep{rafailov2023direct}. In this work, we study how different post-training methods affect forgetting, focusing on supervised fine-tuning and reinforcement learning.

\paragraph{Catastrophic Forgetting.} While fine-tuning primarily aims to improve performance on a new specific task, preserving the model's pre-existing general capabilities is equally critical. Unfortunately, fine-tuning often leads to catastrophic forgetting---a phenomenon where learning new information significantly deteriorates previously acquired knowledge \cite{mccloskey1989catastrophic, french1999catastrophic, kirkpatrick2017overcoming, ouyang2022training, luo2023empirical}. Many works have sought to reduce forgetting by constraining updates, for example, by penalizing the magnitude of change in the model parameters, features, or matching the output on previous tasks/datasets \citep{wang2024comprehensive}. These methods are effective heuristics, but they address the symptoms of forgetting rather than explaining its cause. Our aim is to identify a simple and predictive metric that explains when and why forgetting occurs across different training algorithms.

We do not introduce a new training algorithm, but instead identify a simple \emph{empirical forgetting law}: the KL divergence between the fine-tuned and base policy, measured \emph{on the new task}, reliably predicts the degree of forgetting. The law also sheds light on why some mitigation strategies work. For example, methods like Elastic Weight Consolidation \citep{kirkpatrick2017overcoming} can be seen as approximations to KL minimization \citep{chaudhry2018riemannian}. Interestingly, practitioners have also observed that KL regularization used in RL fine-tuning of LLMs as a heuristic for stabilizing optimization or preventing reward hacking \cite{stiennon2020learning, gao2023scaling}, also helps reduce catastrophic forgetting \citep{ouyang2022training}. Our contribution is to show that KL divergence is not merely a useful heuristic, but a reliable predictor of forgetting across settings.

\paragraph{SFT versus RL.} Prior comparisons between SFT and RL have focused on new task performance. A seminal result in sequential decision making is that on-policy learning can achieve stronger performance even when the expert providing supervision is the same one used to generate the offline dataset \citep{ross2011reduction}. Recent empirical studies have also found that RL fine-tuned models often exhibit superior generalization beyond the training distribution \cite{han2025general, chu2025sft, li2025unveiling} and transfer more effectively to related tasks \cite{huan2025does} compared to SFT. However, prior works haven't examined the relative susceptibility of RL and SFT to catastrophic forgetting, which is the focus of our study.

Concurrently, \citet{lai2025reinforcement} reports that RL forgets less than SFT, but ascribes RL’s advantage to learning from negative examples and not to the on-policy nature of RL. Results in Section~\ref{sec:on_policy} contradict their explanation of why RL forgets less, showing that the on-policy nature of RL is key. We also contribute the empirical forgetting law, the RL Razor, and its theoretical justification.

\section{Reinforcement Learning Forgets Less than SFT}
\label{sec:rl_vs_sft}

\begin{figure}[t]
\centering
\includegraphics[width=\linewidth]{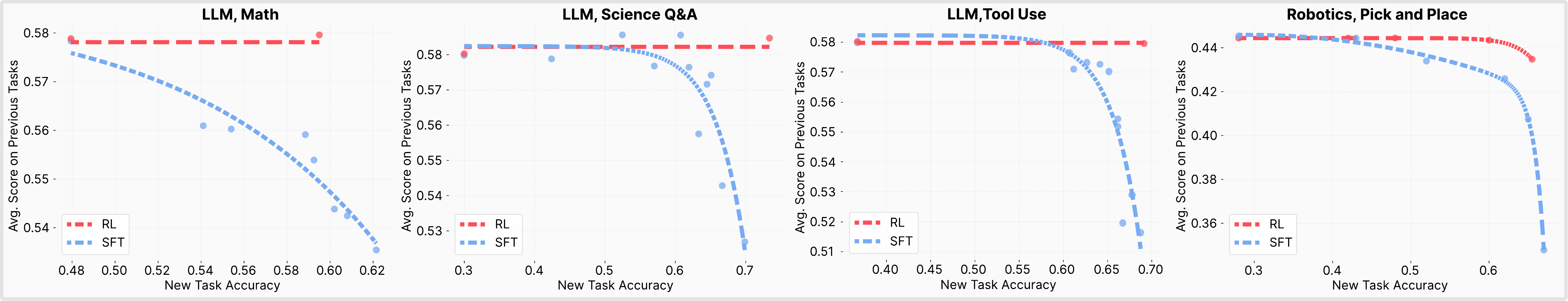}

\caption{\textbf{Pareto frontiers of RL and SFT.} 
Comparing the performance of a fine-tuned model on the new task (x-axis) and prior task (y-axis). Each point corresponds to a model trained with a different set of hyperparameters, and the curves trace the Pareto frontiers for the two methods. RL achieves new-task improvements while maintaining prior knowledge, whereas SFT improves new-task performance at the expense of forgetting the prior task.}
\label{fig:main_Res}
\end{figure}

We report results comparing the degree of catastrophic forgetting against new-task performance induced by RL and SFT on various large language model (LLM) and simulated robotic tasks. 

\subsection{Performance Trade-offs}
\label{subsec:main_res}
\paragraph{Experimental Setup.} For each new task, we fine-tuned models using the same set of prompts. One group of models was trained with SFT, and another with RL using GRPO \cite{shao2024deepseekmath}. In RL training, we used only a binary success indicator as the reward, \emph{without explicit KL regularization}. Evaluation was performed along two axes:
\begin{itemize}[leftmargin=*]
    \item New task Performance: We measured performance on the held-out test set of the newly introduced task to assess the performance gain from the training.
    \item Previous tasks Performance: We measured performance on a diverse set of unrelated benchmarks. A drop in these benchmarks was taken as a measure of catastrophic forgetting.
\end{itemize}
Since different hyperparameters can lead to varying trade-offs between learning and forgetting, we trained dozens of models under diverse hyperparameter settings for both SFT and RL. To compare methods fairly, we identify the Pareto frontier in the two-dimensional plane of new-task performance versus previous-task performance. The Pareto frontier represents the set of models for which no further improvement on the new task is possible without incurring greater forgetting. Figure~\ref{fig:main_Res} (right) reports these frontiers: each point corresponds to a trained model with a different set of hyperparameters, and the Pareto-frontier curve indicates the best achievable trade-off for each method.

\paragraph{Tasks and Datasets. } We perform experiments across three LLM and a single robotic tasks:
\begin{itemize}[leftmargin=*]
    \item \textit{LLM, Math reasoning}: Qwen 2.5 3B-Instruct \citep{qwen2025qwen25technicalreport} trained on math questions from the Open-Reasoner-Zero dataset \citep{hu2025open}.
    \item \textit{LLM, Science Q\&A}: Qwen 2.5 3B-Instruct trained on Chemistry L-3 subset of SciKnowEval \citep{feng2024sciknoweval}.
    \item \textit{LLM, Tool use}: Qwen 2.5 3B-Instruct trained on ToolAlpaca dataset \citep{tang2023toolalpaca}.
    \item \textit{Robotics, Pick and Place}: OpenVLA 7B \citep{kim2024openvla} trained in the SimplerEnv environment \citep{li24simpler} on the task of picking up a can.
\end{itemize}
To measure forgetting, we evaluated the finetuned models on established benchmarks covering diverse prior capabilities. For LLMs, we used Hellaswag \citep{zellers2019hellaswag}, TruthfulQA \citep{lin2021truthfulqa}, MMLU \citep{hendrycks2020measuring}, IFEval \citep{zhou2023instruction}, Winogrande \citep{sakaguchi2021winogrande}, and HumanEval \citep{chen2021evaluating}. For robotic policies, we evaluated on the open/close drawer SimplerEnv tasks, excluding the one used for fine-tuning. These benchmarks act as proxies for prior skills that should be preserved during adaptation. Full details on SFT data sources, hyperparameters, and training/evaluation protocols are provided in Appendix~\ref{appx:hps}.

\paragraph{Results.} Figure~\ref{fig:main_Res} reports the trade-off between new-task performance and retention of prior abilities. For RL, as accuracy on the new task increases, performance on previous benchmarks remains nearly unchanged. In contrast, SFT improvements on the new task consistently come at the cost of substantial forgetting. This difference is most pronounced in \textit{Math}, where even small gains on the fine-tuned task correspond to a sharp reduction in prior-task performance. In \textit{Science Q\&A} and \textit{Tool Use}, SFT retains some ability on prior tasks at lower accuracy levels for the new task, but performance deteriorates rapidly as the model approaches higher accuracy on the new task.

\begin{tcolorbox}[
        title={Takeaway 1},
        valign=center,
        nobeforeafter,
        height=2cm,
        collower=white,
        bottom = 5pt,
    ]
\centering
RL is able to learn new tasks while incurring minimal forgetting, whereas SFT reaches similar new-task performance only by sacrificing prior knowledge.
\end{tcolorbox}

\section{Smaller KL divergences lead to less forgetting}
\label{sec:kl}

\begin{figure}[t]
    \centering
    \includegraphics[width=\linewidth]{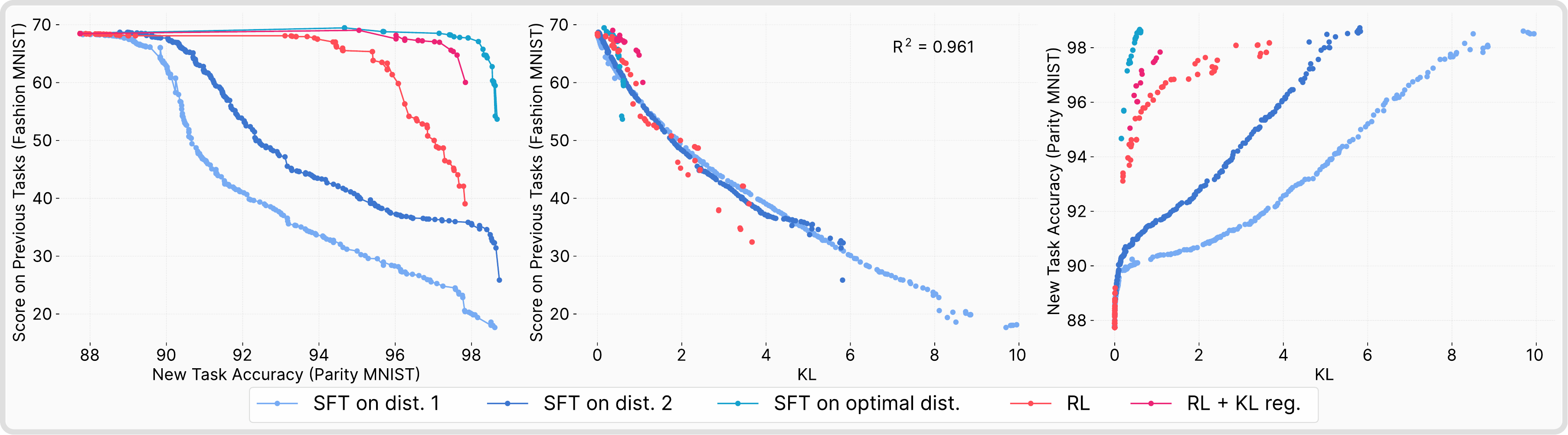}
    \caption{\textbf{KL divergence predicts catastrophic forgetting.} 
(Left) Learning-Forgetting Trade-offs. SFT outperform RL only when an oracle distribution is used as a source of annotation. 
(Middle) Forgetting aligns to a single curve when plotted against KL divergence, showing KL as a strong predictor across methods. 
(Right) RL improves new-task accuracy with much smaller KL shifts than SFT, highlighting the conservativeness of on-policy updates.}
    \label{fig:mnist_kl}
\end{figure}

As shown in Section~\ref{sec:rl_vs_sft}, RL fine-tuning achieves comparable new-task performance to SFT while consistently forgetting less. Explaining this gap requires identifying a variable that determines the degree of forgetting across methods. We therefore searched for a predictor that could account for forgetting independently of the training algorithm or hyperparameters. Such a predictor would both explain the empirical difference between RL and SFT and offer a unifying principle for catastrophic forgetting. Prior work has proposed candidates such as the magnitude of weight changes, sparsity of updates, or gradient rank. Across our experiments, however, none of these variables consistently aligned with the observed forgetting behavior (see Section \ref{sec:wrong_hyp}). What did emerge was an \textit{empirical forgetting law}: the \textbf{KL divergence between the fine-tuned model and the base model, measured on the new task}, reliably predicts the degree of forgetting.

Testing this hypothesis in large LLMs is challenging, since RL training is computationally expensive and cannot easily be run to convergence. Moreover, the search for predictors requires repeating fine-tuning many times under diverse conditions. To address these limitations, we designed a controlled toy setting, ParityMNIST, that allows us to replicate the RL–SFT gap under full convergence and perform systematic ablations.

ParityMNIST is derived from MNIST \citep{deng2012mnist}, but reframes the task as predicting parity (even vs. odd). An image of an even digit is correctly classified if the model predicts \emph{any} even digit label, and likewise for odd digits. Multiple output distributions are thus equally valid, mirroring a key property of the generative tasks we studied in section \ref{sec:rl_vs_sft}: \textit{many distinct policies can achieve the same performance}.

We pretrained a 3-layer MLP jointly on a subset of ParityMNIST and FashionMNIST \citep{xiao2017fashion}, then fine-tuned only on ParityMNIST while measuring forgetting on FashionMNIST. This design provides a minimal, tractable setting for investigating predictors of forgetting. To parallel the main experiments:

\begin{itemize}[leftmargin=*]
    \item In the \textbf{SFT} setting, the model was trained on labels sampled from a single arbitrary distribution out of the many possible correct ones.
    \item In the \textbf{RL} setting, the reward was correctness with respect to parity, leaving the model free to converge to any valid distribution.
\end{itemize}

For more details, see Appendix \ref{appx:mnist}. This design allowed us to replicate the phenomenon where RL reached high accuracy on the new task with substantially slower degradation of prior knowledge, while SFT exhibited a steeper trade-off (Figure \ref{fig:mnist_kl}, left). Importantly, \emph{reproducing the effect in this simple MLP setting shows that it is not specific to large scale transformers, but a more general property of fine-tuning deep generative models}.

\paragraph{KL as Predictor.} 
Plotting forgetting against the KL divergence from the base model on ParityMNIST reveals a single functional relationship across both RL and SFT (Figure \ref{fig:mnist_kl}, middle). This indicates that forgetting is determined by KL divergence, not by the choice of training algorithm. A quadratic fit achieves $R^2=0.96$ in this setting, underscoring the strength of the relationship. To test robustness, we repeated the experiment with two different arbitrary SFT labelings. Although their Pareto frontiers differed, the forgetting–KL curves coincided, confirming that KL consistently predicts forgetting irrespective of training method or label distribution. The same correlation appears in our LLM experiments, with a quadratic fit achieving $R^2=0.71$ (Figure \ref{fig:kl_llms}). While weaker, the residuals are mean-zero and can be attributed to noise from approximate KL and accuracy estimation. 

\paragraph{Optimal SFT Distribution.} 
To validate that KL divergence is the predictor variable, we constructed an oracle SFT distribution. In ParityMNIST, the simplicity of the task allows us to analytically identify the labeling that minimizes KL divergence to the base model among all distributions achieving 100\% accuracy (Appendix \ref{appx:mnist}). 
If KL divergence fully determines forgetting, then training SFT on this oracle distribution should yield the optimal accuracy–forgetting trade-off. The results in Figure \ref{fig:mnist_kl} confirm this prediction---SFT trained on the oracle distribution retained more prior knowledge than RL, achieving the best trade-off observed. RL performs well because its on-policy updates bias the solution toward low-KL regions, but when SFT is explicitly guided to the KL-minimal distribution, it can surpass RL. 
As an additional validation, we trained an SFT model on data generated by an RL-trained model. The distilled SFT matched RL’s accuracy–forgetting trade-off (Figure \ref{fig:RL_distilation}), reinforcing that the distribution learned, rather than the optimization algorithm, governs forgetting.

\begin{tcolorbox}[
        title={Takeaway 2},
        valign=center,
        nobeforeafter,
        height=2cm,
        collower=white,
        bottom = 5pt,
    ]
\centering
Catastrophic forgetting in both SFT and RL is predicted by the KL divergence between the fine-tuned and base models on the new task.
\end{tcolorbox}

\section{On-policy methods leads to smaller KL divergence}
\label{sec:on_policy}

\begin{figure}
    \centering
    \includegraphics[width=1\linewidth]{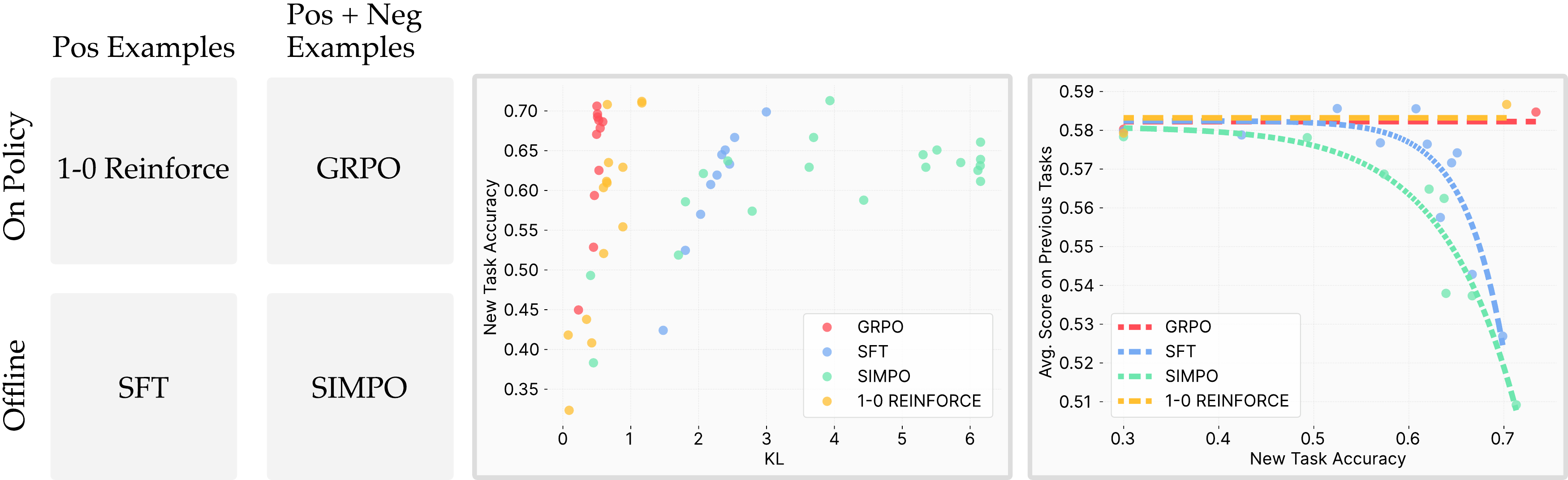}
    \caption{\textbf{Comparison of algorithm classes.} (Left) The four quadrants illustrate algorithm types, defined by whether they are on-policy or offline and whether they incorporate negative gradients. 
(Middle) On-policy methods retain prior knowledge more effectively. 
(Right) Both GRPO and 1-0 Reinforce achieve higher new-task accuracy while incurring smaller KL shifts from the base model, showing that on-policy methods consistently induce more conservative KL updates.}
    \label{fig:4_quadrants}
\end{figure}

Having established that the KL divergence between the trained model and its base distribution on the new task predicts catastrophic forgetting, we now ask: why are RL fine-tuned models able to achieve strong task performance while moving less in KL than SFT models?

\subsection{Experimental Evidence}
To understand the difference in KL behavior, it is useful to contrast the training objectives of SFT and RL. For discrete outputs, SFT minimizes cross-entropy against a supervision distribution $\pi_\beta$ over a distribution of inputs $\mathcal D$:
\begin{equation*}
    \mathcal L_{\text{SFT}}(\pi)=-\mathbb E_{x\sim \mathcal D, \textcolor{teal}{y \sim\pi_\beta}}[\log \pi(y|x)]
\end{equation*}
In contrast, RL with policy gradients optimizes\footnote{Notice that in practice, the policy gradient trick \citep{sutton1998reinforcement} ensures gradients are taken only through the log-probability term, not through the sampling distribution inside the expectation.}:
\begin{equation*}
    \mathcal{L}_{\text{RL}}(\pi) = - \mathbb{E}_{x \sim \mathcal{D},  \textcolor{teal}{y \sim\pi}}\left[ \textcolor{purple}{A(x, y)}\log\pi(y|x) \right]
\end{equation*}
where $A(x,y)$ is an Advantage function, which is the reward of $y$ normalized with respect to other rewards for the same $x$. Two features distinguish this from SFT:
\begin{enumerate}[leftmargin=*]
    \item \textcolor{teal}{Sampling Distribution.} While in RL the training was done on outputs drawn from the model’s own distribution, in SFT they come from fixed external annotations.
    \item \textcolor{purple}{Negative Examples.} While sampling from $\pi$, some of the responses will be incorrect. These are usually assigned a negative coefficient $A(x,y)$. This pushes probability mass away from poor outputs, a mechanism absent in SFT.
\end{enumerate}
Our hypothesis is that one of these two differences is what causes RL’s resistance to forgetting. To examine our hypothesis, we perform experiments with four different objectives:
\begin{itemize}[leftmargin=*]
    \item \textit{GRPO}. An on-policy objective that utilizes negative examples. Here, $A(x,y)$ is the normalized reward. 
    \item \textit{1–0 Reinforce}. An on-policy algorithm that does not use negative examples. Here, $A(x,y)=1$ for correct responses and $0$ for incorrect ones. This is equivalent to sampling from the model and performing SFT on correct answers only.
    \item \textit{SFT}. An offline objective that does not use negative examples.
    \item \textit{SimPO}. An offline objective that utilizes negative examples. We create negative examples by sampling incorrect responses from an external model, and use the SFT data for positive examples. The SimPO \citep{meng2024simpo} loss compares correct and incorrect outputs via a logistic term:
\begin{equation*}
    \mathcal{L}_{\text{SIMPO}}(\pi) = - \mathbb{E}_{x \sim \mathcal{D},  y_w \sim\pi_{\beta^+}, y_l \sim\pi_{\beta^-}}\left[\log\sigma\left(\log\pi(y_w|x)-\log\pi(y_l|x)-1\right) \right]
\end{equation*}
where $\pi_{\beta^+}$ and $\pi_{\beta^-}$ denote distributions for correct and incorrect responses, respectively. We used SimPO rather than naïve likelihood/negative likelihood because the latter was unstable to train.
\end{itemize}

We compared the four objectives on the Science Q\&A task, measuring their learning–forgetting trade-offs as in Section 4. The results, shown in Figure \ref{fig:4_quadrants}, reveal that 1–0 Reinforce behaves similarly to GRPO, while SimPO resembles SFT. Thus, the critical factor is not the presence of negative gradients but the use of on-policy data.
Plotting KL divergence confirms this conclusion: on-policy methods (GRPO and 1–0 Reinforce) reach the same task performance with significantly smaller KL divergence from the base model than offline methods (SFT and SimPO).

\subsection{Theoretical Perspective} 

\begin{wrapfigure}{r}{0.4\linewidth}
    \centering
    \vspace{-1em}
    \includegraphics[width=\linewidth]{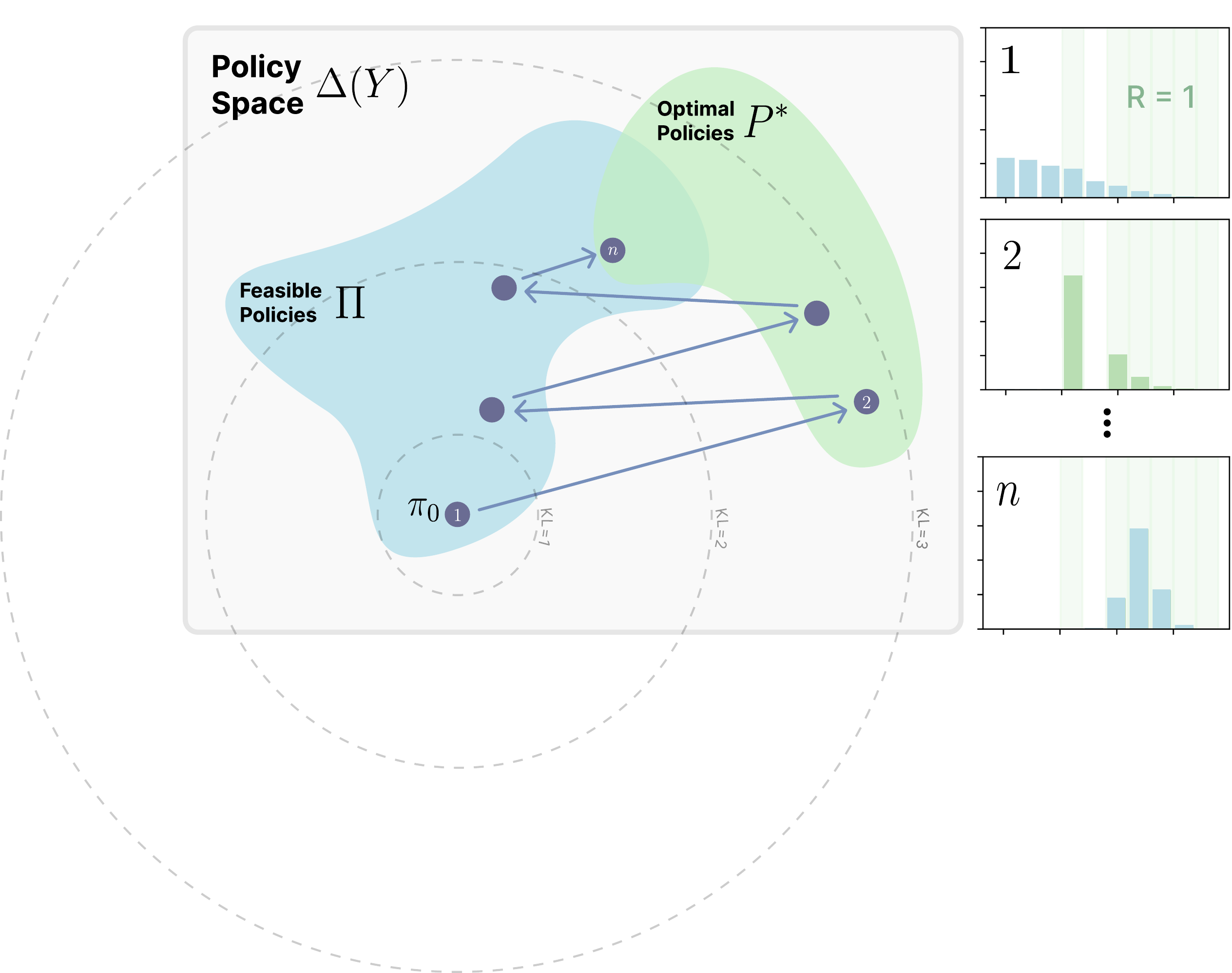}
    \label{fig:theory-perspective}
    \caption{\textbf{KL-minimal path to optimality.} Alternating I-projection into the set of optimal policies and M-projection into $\Pi$ carries $\pi_0$ into $P^*$ while preferring the closest solution in KL.}

\end{wrapfigure}
Beyond the empirical results, it is useful to ask why on-policy methods naturally induce smaller KL shifts. One way to see this is through the lens of projection in probability space: policy gradient methods can be understood as a conservative projection that keeps the policy close to its starting point while reweighting toward higher-reward outcomes. At each step, the policy samples outputs it already finds likely, then re-weights those samples according to reward, shifting probability mass toward higher-reward outcomes while suppressing lower-reward ones. Crucially, because updates are defined relative to the model’s own distribution, they nudge the policy toward a nearby re-weighted distribution, rather than pulling it toward a potentially distant external distribution (as in SFT). This explains why policy gradient methods tend to remain close to the base model in KL divergence.

This perspective can be formalized by observing that, in the binary-reward case, the re-weighted distribution targeted by policy gradient is exactly the minimum-KL projection of the current policy onto the set of optimal ones.

\begin{lemma}
Let $p$ be a distribution over a finite set $Y$, and let $R:Y \to \{0,1\}$ be a reward function. Rejection sampling from $p$ with acceptance condition $R(y)=1$ yields a distribution $q_{\mathrm{RS}}$. This distribution can be equivalently characterized as the solution to:
$$ 
q_{\text{RS}}=\argmin_q D_{\text{KL}}(q||p) \quad s.t \quad \mathbb E_{y\sim q}[R(y)]=1
$$
\end{lemma}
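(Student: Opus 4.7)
The plan is to derive an explicit closed form for the rejection-sampled distribution $q_{\mathrm{RS}}$ and then show it coincides with the unique minimizer of the constrained KL problem. By definition of rejection sampling on $p$ with acceptance indicator $R$,
\[
q_{\mathrm{RS}}(y) \;=\; \frac{p(y)\,R(y)}{Z}, \qquad Z \;:=\; \E_{y\sim p}[R(y)] \;=\; \sum_{y':\,R(y')=1} p(y'),
\]
assuming $Z>0$ (otherwise the constraint is infeasible and the lemma is vacuous). The goal is therefore to prove that this closed form also solves the optimization on the right.

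First I would characterize the feasible set. Because $R$ takes values in $\{0,1\}$ and $q$ is a probability distribution, $\E_{y\sim q}[R(y)] = \sum_y q(y)R(y) \leq \sum_y q(y) = 1$, with equality if and only if $q(y)=0$ for every $y$ with $R(y)=0$. Hence the constraint $\E_{y\sim q}[R(y)] = 1$ is equivalent to the support condition $\mathrm{supp}(q)\subseteq S := \{y : R(y)=1\}$. This collapses the problem to minimizing $\KL(q\|p)$ over the subsimplex of distributions supported on $S$.

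Next I would solve this reduced problem via the standard decomposition. Writing $p_S(y) := p(y)/Z$ for $y\in S$ (and $0$ otherwise), a direct calculation gives, for every $q$ supported on $S$,
\[
\KL(q\|p) \;=\; \sum_{y \in S} q(y)\log\frac{q(y)}{p(y)} \;=\; \KL(q\|p_S) \;-\; \log Z.
\]
Since $-\log Z$ does not depend on $q$ and $\KL(q\|p_S)\geq 0$ with equality iff $q = p_S$ (Gibbs' inequality), the unique minimizer is $q = p_S$, which is exactly $q_{\mathrm{RS}}$. A Lagrangian argument on $\sum_{y\in S} q(y) = 1$ produces the stationary condition $q(y)\propto p(y)$ on $S$ and recovers the same answer.

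The only subtle step is the equivalence between the expectation constraint and the support condition; once that is in place, the remaining optimization is a textbook I-projection onto the subsimplex $\{q : \mathrm{supp}(q)\subseteq S\}$, for which Gibbs' inequality yields both existence and uniqueness. So rather than being a difficult theorem, the lemma is really a reformulation, but it is exactly the reformulation that justifies viewing an on-policy policy-gradient update as a KL-minimal step toward the set of optimal policies, which is the point needed by the surrounding discussion.
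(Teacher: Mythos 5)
Your proof is correct and takes essentially the same route as the paper: reduce the expectation constraint to the support condition $\mathrm{supp}(q)\subseteq S$, decompose $D_{\mathrm{KL}}(q\Vert p) = D_{\mathrm{KL}}(q\Vert p(\cdot\mid S)) - \log p(S)$, and conclude by nonnegativity/uniqueness of KL (you invoke Gibbs' inequality where the paper cites strict convexity of $D_{\mathrm{KL}}(\cdot\Vert\cdot)$ in its first argument, but these give the same conclusion). The only additions are your explicit handling of the degenerate $Z=0$ case and the optional Lagrangian remark, both fine but not changes in substance.
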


Building on this, we show that policy gradient converges to the KL-minimal optimal policy within the representable family.

\begin{theorem}
Let $Y$ be a finite set and let $\Pi \subseteq \Delta(Y)$ be a convex family of feasible policies (e.g., an exponential family). Let $R:Y \to \{0,1\}$ be a binary reward function and $P^*=\{q:\mathbb E_q[R]=1\}$ the set of optimal policies. Then, under suitable regularity conditions, solving the reinforcement learning objective with policy gradient converges to
$$
\pi^\dagger = \arg\min_{\pi \in P^* \cap \Pi} D_{\mathrm{KL}}(\pi \,\Vert\, \pi_0),
$$
where $\pi_0$ is the initialization.  
In other words, policy gradient selects, among all optimal representable policies, the one closest in KL-divergence to the starting policy.
\end{theorem}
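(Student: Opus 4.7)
The strategy is to recast exact policy gradient as an alternating $\KL$-projection scheme between the linear set $P^*$ and the feasible family $\Pi$, and then identify the unique fixed point of that scheme with $\pi^\dagger$ via the Csisz\'ar--Tusn\'ady projection theorem. The binary-reward structure is what makes this work: $P^*$ reduces to distributions supported on $Y_+ := R^{-1}(1)$, which is an affine (hence m-flat) subset of the simplex, and paired with an exponential family $\Pi$ we land in the classical linear/exponential pair for which alternating $\KL$-projections are well behaved.

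First I would apply the Lemma with $p=\pi_t$: the rejection-sampling distribution $\tilde\pi_t$ from the current policy is exactly the I-projection $\argmin_{q\in P^*}\KL(q\,\Vert\,\pi_t)$. Next I would expand one natural policy-gradient step and verify that, because $R\in\{0,1\}$,
\[
\nabla_\theta\,\E_{y\sim\pi_\theta}[R(y)] \;\propto\; -\,\nabla_\theta\,\KL(\tilde\pi_t\,\Vert\,\pi_\theta),
\]
so the exact small-step update on $\Pi$ equals the M-projection $\pi_{t+1}=\argmin_{\pi\in\Pi}\KL(\tilde\pi_t\,\Vert\,\pi)$. Iterating these two projections gives the Csisz\'ar--Tusn\'ady scheme, whose standard convergence theorem for linear/exponential pairs yields monotone decrease of $\KL(\tilde\pi_t\,\Vert\,\pi_t)$ and convergence to a unique $\pi^\dagger\in P^*\cap\Pi$. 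Finally, the generalized Pythagorean identity
\[
\KL(\pi\,\Vert\,\pi_0) \;=\; \KL(\pi\,\Vert\,\pi^\dagger) + \KL(\pi^\dagger\,\Vert\,\pi_0) \qquad \text{for all }\pi\in P^*\cap\Pi
\]
immediately identifies the limit as $\argmin_{\pi\in P^*\cap\Pi}\KL(\pi\,\Vert\,\pi_0)$, which is the conclusion.

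The main technical obstacle is the second step: exactly identifying a discrete policy-gradient update with an M-projection rather than only an approximation. For vanilla Euclidean parameter updates this correspondence is accurate only to leading order in the step size, and the trajectory can drift off the idealized alternating-projection path. The cleanest way around this is to pass to the continuous-time gradient-flow limit (or equivalently to mirror descent with $\KL$ as Bregman divergence), in which the update direction is exactly tangent to the Fisher--Rao geodesic from $\pi_t$ toward $\tilde\pi_t$ and remains within $\Pi$ by construction. If one wants to retain discrete steps, the fallback is a Lyapunov argument with $V(\pi)=\KL(\pi^\dagger\,\Vert\,\pi)$: using the convexity of $\Pi$ and the Pythagorean inequality for I-projections, $V$ strictly decreases along the flow away from $\pi^\dagger$, ruling out spurious fixed points. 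The ``suitable regularity conditions'' of the statement should bundle non-emptiness of $P^*\cap\Pi$, strict positivity of $\pi_0$ on $Y_+$ (so rejection sampling is well-defined and $\tilde\pi_t$ is absolutely continuous with respect to $\pi_t$), and smoothness of the parameterization of $\Pi$ sufficient for the gradient flow to exist and the projections to be attained.
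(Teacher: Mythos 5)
Your proposal follows essentially the same route as the paper: it uses the rejection-sampling-as-I-projection lemma, identifies the policy-gradient step with an M-projection of the reweighted distribution onto $\Pi$, iterates to obtain the alternating I/M-projection (EM) scheme, and invokes the Pythagorean identity for the m-flat/e-flat pair to identify the limit as the KL-minimal optimal policy. The only cosmetic difference is in handling the gap between discrete gradient steps and exact M-projections — you suggest gradient flow or a Lyapunov argument, whereas the paper appeals to GEM theory with summable projection errors — but the core decomposition and reasoning are identical.
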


A detailed version with proofs is provided in Appendix \ref{appx:theory}.

\begin{tcolorbox}[
        title={Takeaway 3},
        valign=center,
        nobeforeafter,
        height=2cm,
        collower=white,
        bottom = 5pt,
    ]
\centering
On-policy training explains why RL maintains smaller KL divergence than SFT. Sampling from the model’s own distribution keeps it close to the base model, while SFT pushes it toward arbitrary external distributions.
\end{tcolorbox}

\section{Alternative Hypothesis}
\label{sec:wrong_hyp}
Science advances not only by identifying the right explanations, but also by eliminating incorrect ones. To this end, we systematically evaluated alternative variables as potential predictors of catastrophic forgetting, grouped into four categories:

\begin{itemize}[leftmargin=*]
    \item \textbf{Weight-level changes.} Many prior work tried to mitigate forgetting by constraining the change in parameter space \citep{kirkpatrick2017overcoming, aljundi2018memory, zenke2017continual}. We measured parameter changes under $L_1$, Fisher-weighted $L_2$, and spectral norm metrics. The Fisher matrix was computed on the basis of the model parameters, with expectation over inputs from the previous task. These metrics correlated only weakly with forgetting: large parameter shifts could occur without forgetting, and conversely, forgetting sometimes occurred despite small parameter movement.
    \item \textbf{Representation-level changes.} Some other papers focused on maintaining the previous features \citep{jung2018less, hou2019learning, dhar2019learning}. We examined hidden activation shifts (L1 and L2 distances) as proxies for changes in internal representations. Although we found that there is representation drift during training (see Appendix \ref{appx:CKA}), the curves were distinct between training objectives, meaning that it is not a good predictor.
    \item \textbf{Sparsity and rank of updates.} Motivated by \citet{mukherjee2025reinforcement}, who argue that RL updates are sparse while SFT weight updates are dense, we explicitly tested this hypothesis. In our setting, however, we found that the reason for the observed sparse updates was the use of \texttt{bfloat16} for model training. Since \texttt{bfloat16} has a limited mantissa, small parameter updates (such as those produced by RL) can fail to cross the representational threshold, effectively causing no update at all. Performing the same training with \texttt{float32} resulted in models with identical performance but without any sparsity in their weight updates. Checking the rank of the weight changes, we found that all algorithms lead to full rank weight updates.
    \item \textbf{Distributional distances.} We considered multiple measures of output distribution change, all measured over inputs from the new task $\tau$: Forward KL ($\mathbb{E}_{x\sim\tau} \big[ \text{KL}(\pi_0 || \pi) \big]$), Reverse KL ($\mathbb{E}_{x\sim\tau} \big[ \text{KL}(\pi || \pi_0) \big]$), Total Variation, and $L_2$ distance between distributions. While reverse KL showed a good signal, and TV moderately correlated with forgetting, none approached the predictive power of forward KL.
\end{itemize}

Table~\ref{tab:kl_vs_alt} summarizes these results for the MNIST task. Across all candidates, forward KL divergence between the fine-tuned and base model evaluated on the new task emerges as the only consistent and high-fidelity predictor of catastrophic forgetting.

\begin{table}[h]
\centering
\begin{tabular}{@{}ll@{}}
\toprule
Variable                          & $R^2$ (2nd deg. polynomial)   \\ \midrule
KL, forward                       & \textbf{0.96 $\pm\, $0.01} \\
KL, reverse                       & $0.93 \pm 0.01$ \\
TV                                & $0.80 \pm 0.01$ \\
Distribution change, L2           & $0.56 \pm 0.02$ \\
Weight change, L1                 & $0.34 \pm 0.02$ \\
Weight change, Fisher Weighted L2 & $0.58 \pm 0.02$ \\
Weight change, spectral norm      & $0.58 \pm 0.02$ \\
Sparsity of weight change         & N/A   \\
Rank of weight change             & N/A   \\
Activation change, L1             & $0.52 \pm 0.02$ \\
Activation change, L2             & $0.55 \pm 0.02$ \\ \bottomrule
\end{tabular}
\caption{Predictive power of alternative variables compared to forward KL. None approaches the explanatory strength of forward KL divergence.}
\label{tab:kl_vs_alt}
\end{table}

\section{Discussion and Conclusion}
Our study reveals that catastrophic forgetting is governed not by the choice of training algorithm, but by the KL divergence from the base policy evaluated on the new task. This explains why RL forgets less than SFT, as on-policy training naturally biases updates toward KL-minimal solutions, preserving prior knowledge while acquiring new skills.

However, we still lack a mechanistic account of why larger KL shifts on the new task disrupt prior knowledge—whether through representational interference, implicit capacity limits, or other dynamics. Moreover, while we demonstrate the KL–forgetting link across moderate-scale LLMs and toy models, its behavior at frontier scales and in more diverse generative domains remains unknown. In addition, we didn't study online but off-policy algorithms, which are popular in RL. Addressing these gaps will be essential for grounding the principle and extending it to real-world deployment.

Taken together, our results motivate a new design axis for post-training research: algorithms should be judged not only by how well they optimize new tasks, but also by how conservatively they move in KL relative to the base model. Importantly, this does not mean offline data cannot help, but that continual learning requires updates to keep learning close to the KL-minimal path. Embracing this principle may allow us to build agents that not only learn new skills, but also truly learn for life.

\section*{Acknowledgment}
We want to express our gratitude to Nitish Dashora,  Seungwook Han, Moritz Reuss, Zhang-Wei Hong, Leshem Choshen, Ahmad Beirami, Mehul Damani, Akarsh Kumar, and members of the Improbable AI lab for the helpful discussion on the paper. We are grateful to MIT Supercloud
and the Lincoln Laboratory Supercomputing Center for providing HPC resources. The research was supported in part by Hyundai Motor Company, Qualcomm Innovation Fellowship, Google, and Amazon. The research was sponsored
by the Army Research Office and was accomplished under
Grant Number W911NF-21-1-0328. The research was also sponsored by the Office of Naval Research and was accomplished under Grant Number N00014-22-1-2740. Research was also sponsored by the Department of the Air Force Artificial Intelligence Accelerator and was accomplished under Cooperative Agreement Number FA8750-19-2-1000. The views and conclusions contained in this document are those of the authors and should not be interpreted as representing the official policies, either expressed or implied, of the Department of the Air Force or the U.S. Government. The U.S. Government is authorized to reproduce and distribute reprints for Government purposes notwithstanding any copyright notation herein. The views and conclusions contained in this document are those of the authors
and should not be interpreted as representing the official
policies, either expressed or implied, of the Army Research
Office, Naval Research Office, Air Force, or the U.S. Government.\looseness=-1

\section*{Author Contributions}

\textbf{Jyothish Pari} co-developed the project and contributed in all aspects of experiments and writing.

\textbf{Idan Shenfeld} co-developed the project and contributed in all aspects of experiments and writing.

\textbf{Pulkit Agrawal} co-developed the project direction, advised IS and JP, and played a significant role in paper writing.

\bibliography{iclr2025_conference}

\begin{thebibliography}{78}
\providecommand{\natexlab}[1]{#1}
\providecommand{\url}[1]{\texttt{#1}}
\expandafter\ifx\csname urlstyle\endcsname\relax
  \providecommand{\doi}[1]{doi: #1}\else
  \providecommand{\doi}{doi: \begingroup \urlstyle{rm}\Url}\fi

\bibitem[Achiam et~al.(2023)Achiam, Adler, Agarwal, Ahmad, Akkaya, Aleman, Almeida, Altenschmidt, Altman, Anadkat, et~al.]{achiam2023gpt}
Josh Achiam, Steven Adler, Sandhini Agarwal, Lama Ahmad, Ilge Akkaya, Florencia~Leoni Aleman, Diogo Almeida, Janko Altenschmidt, Sam Altman, Shyamal Anadkat, et~al.
\newblock Gpt-4 technical report.
\newblock \emph{arXiv preprint arXiv:2303.08774}, 2023.

\bibitem[Aljundi et~al.(2018)Aljundi, Babiloni, Elhoseiny, Rohrbach, and Tuytelaars]{aljundi2018memory}
Rahaf Aljundi, Francesca Babiloni, Mohamed Elhoseiny, Marcus Rohrbach, and Tinne Tuytelaars.
\newblock Memory aware synapses: Learning what (not) to forget.
\newblock In \emph{Proceedings of the European conference on computer vision (ECCV)}, pp.\  139--154, 2018.

\bibitem[Amari \& Nagaoka(2000)Amari and Nagaoka]{amari2000methods}
Shun-ichi Amari and Hiroshi Nagaoka.
\newblock \emph{Methods of information geometry}, volume 191.
\newblock American Mathematical Soc., 2000.

\bibitem[Bommasani(2021)]{bommasani2021opportunities}
Rishi Bommasani.
\newblock On the opportunities and risks of foundation models.
\newblock \emph{arXiv preprint arXiv:2108.07258}, 2021.

\bibitem[Brohan et~al.(2022)Brohan, Brown, Carbajal, Chebotar, Dabis, Finn, Gopalakrishnan, Hausman, Herzog, Hsu, et~al.]{brohan2022rt}
Anthony Brohan, Noah Brown, Justice Carbajal, Yevgen Chebotar, Joseph Dabis, Chelsea Finn, Keerthana Gopalakrishnan, Karol Hausman, Alex Herzog, Jasmine Hsu, et~al.
\newblock Rt-1: Robotics transformer for real-world control at scale.
\newblock \emph{arXiv preprint arXiv:2212.06817}, 2022.

\bibitem[Brown et~al.(2020)Brown, Mann, Ryder, Subbiah, Kaplan, Dhariwal, Neelakantan, Shyam, Sastry, Askell, et~al.]{brown2020language}
Tom Brown, Benjamin Mann, Nick Ryder, Melanie Subbiah, Jared~D Kaplan, Prafulla Dhariwal, Arvind Neelakantan, Pranav Shyam, Girish Sastry, Amanda Askell, et~al.
\newblock Language models are few-shot learners.
\newblock \emph{Advances in neural information processing systems}, 33:\penalty0 1877--1901, 2020.

\bibitem[Castro et~al.(2018)Castro, Mar{\'\i}n-Jim{\'e}nez, Guil, Schmid, and Alahari]{castro2018end}
Francisco~M Castro, Manuel~J Mar{\'\i}n-Jim{\'e}nez, Nicol{\'a}s Guil, Cordelia Schmid, and Karteek Alahari.
\newblock End-to-end incremental learning.
\newblock In \emph{Proceedings of the European conference on computer vision (ECCV)}, pp.\  233--248, 2018.

\bibitem[Chaudhry et~al.(2018)Chaudhry, Dokania, Ajanthan, and Torr]{chaudhry2018riemannian}
Arslan Chaudhry, Puneet~K Dokania, Thalaiyasingam Ajanthan, and Philip~HS Torr.
\newblock Riemannian walk for incremental learning: Understanding forgetting and intransigence.
\newblock In \emph{Proceedings of the European conference on computer vision (ECCV)}, pp.\  532--547, 2018.

\bibitem[Chen et~al.(2021)Chen, Tworek, Jun, Yuan, Pinto, Kaplan, Edwards, Burda, Joseph, Brockman, et~al.]{chen2021evaluating}
Mark Chen, Jerry Tworek, Heewoo Jun, Qiming Yuan, Henrique Ponde De~Oliveira Pinto, Jared Kaplan, Harri Edwards, Yuri Burda, Nicholas Joseph, Greg Brockman, et~al.
\newblock Evaluating large language models trained on code.
\newblock \emph{arXiv preprint arXiv:2107.03374}, 2021.

\bibitem[Chu et~al.(2025)Chu, Zhai, Yang, Tong, Xie, Schuurmans, Le, Levine, and Ma]{chu2025sft}
Tianzhe Chu, Yuexiang Zhai, Jihan Yang, Shengbang Tong, Saining Xie, Dale Schuurmans, Quoc~V Le, Sergey Levine, and Yi~Ma.
\newblock Sft memorizes, rl generalizes: A comparative study of foundation model post-training.
\newblock \emph{arXiv preprint arXiv:2501.17161}, 2025.

\bibitem[Chung et~al.(2024)Chung, Hou, Longpre, Zoph, Tay, Fedus, Li, Wang, Dehghani, Brahma, et~al.]{chung2024scaling}
Hyung~Won Chung, Le~Hou, Shayne Longpre, Barret Zoph, Yi~Tay, William Fedus, Yunxuan Li, Xuezhi Wang, Mostafa Dehghani, Siddhartha Brahma, et~al.
\newblock Scaling instruction-finetuned language models.
\newblock \emph{Journal of Machine Learning Research}, 25\penalty0 (70):\penalty0 1--53, 2024.

\bibitem[Cossu et~al.(2024)Cossu, Carta, Passaro, Lomonaco, Tuytelaars, and Bacciu]{cossu2024continual}
Andrea Cossu, Antonio Carta, Lucia Passaro, Vincenzo Lomonaco, Tinne Tuytelaars, and Davide Bacciu.
\newblock Continual pre-training mitigates forgetting in language and vision.
\newblock \emph{Neural Networks}, 179:\penalty0 106492, 2024.

\bibitem[Csisz{\'a}r(1984)]{csiszar1984information}
Imre Csisz{\'a}r.
\newblock Information geometry and alternating minimization procedures.
\newblock \emph{Statistics and Decisions, Dedewicz}, 1:\penalty0 205--237, 1984.

\bibitem[Dao \& Le(2025)Dao and Le]{dao2025rezero}
Alan Dao and Thinh Le.
\newblock Rezero: Enhancing llm search ability by trying one-more-time.
\newblock \emph{arXiv preprint arXiv:2504.11001}, 2025.

\bibitem[Dempster et~al.(1977)Dempster, Laird, and Rubin]{dempster1977maximum}
Arthur~P Dempster, Nan~M Laird, and Donald~B Rubin.
\newblock Maximum likelihood from incomplete data via the em algorithm.
\newblock \emph{Journal of the royal statistical society: series B (methodological)}, 39\penalty0 (1):\penalty0 1--22, 1977.

\bibitem[Deng(2012)]{deng2012mnist}
Li~Deng.
\newblock The mnist database of handwritten digit images for machine learning research [best of the web].
\newblock \emph{IEEE signal processing magazine}, 29\penalty0 (6):\penalty0 141--142, 2012.

\bibitem[Dhar et~al.(2019)Dhar, Singh, Peng, Wu, and Chellappa]{dhar2019learning}
Prithviraj Dhar, Rajat~Vikram Singh, Kuan-Chuan Peng, Ziyan Wu, and Rama Chellappa.
\newblock Learning without memorizing.
\newblock In \emph{Proceedings of the IEEE/CVF conference on computer vision and pattern recognition}, pp.\  5138--5146, 2019.

\bibitem[Dodge et~al.(2020)Dodge, Ilharco, Schwartz, Farhadi, Hajishirzi, and Smith]{dodge2020fine}
Jesse Dodge, Gabriel Ilharco, Roy Schwartz, Ali Farhadi, Hannaneh Hajishirzi, and Noah Smith.
\newblock Fine-tuning pretrained language models: Weight initializations, data orders, and early stopping.
\newblock \emph{arXiv preprint arXiv:2002.06305}, 2020.

\bibitem[Feng et~al.(2024)Feng, Ding, Wang, Zhuang, Wang, Qin, Zhao, Yao, Zhang, and Chen]{feng2024sciknoweval}
Kehua Feng, Keyan Ding, Weijie Wang, Xiang Zhuang, Zeyuan Wang, Ming Qin, Yu~Zhao, Jianhua Yao, Qiang Zhang, and Huajun Chen.
\newblock Sciknoweval: Evaluating multi-level scientific knowledge of large language models.
\newblock \emph{arXiv preprint arXiv:2406.09098}, 2024.

\bibitem[French(1999)]{french1999catastrophic}
Robert~M French.
\newblock Catastrophic forgetting in connectionist networks.
\newblock \emph{Trends in cognitive sciences}, 3\penalty0 (4):\penalty0 128--135, 1999.

\bibitem[Gao et~al.(2025)Gao, Geng, Hua, Hu, Juan, Liu, Liu, Qiu, Qi, Wu, et~al.]{gao2025survey}
Huan-ang Gao, Jiayi Geng, Wenyue Hua, Mengkang Hu, Xinzhe Juan, Hongzhang Liu, Shilong Liu, Jiahao Qiu, Xuan Qi, Yiran Wu, et~al.
\newblock A survey of self-evolving agents: On path to artificial super intelligence.
\newblock \emph{arXiv preprint arXiv:2507.21046}, 2025.

\bibitem[Gao et~al.(2023)Gao, Schulman, and Hilton]{gao2023scaling}
Leo Gao, John Schulman, and Jacob Hilton.
\newblock Scaling laws for reward model overoptimization.
\newblock In \emph{International Conference on Machine Learning}, pp.\  10835--10866. PMLR, 2023.

\bibitem[Gao et~al.(2024)Gao, Tow, Abbasi, Biderman, Black, DiPofi, Foster, Golding, Hsu, Le~Noac'h, Li, McDonell, Muennighoff, Ociepa, Phang, Reynolds, Schoelkopf, Skowron, Sutawika, Tang, Thite, Wang, Wang, and Zou]{eval-harness}
Leo Gao, Jonathan Tow, Baber Abbasi, Stella Biderman, Sid Black, Anthony DiPofi, Charles Foster, Laurence Golding, Jeffrey Hsu, Alain Le~Noac'h, Haonan Li, Kyle McDonell, Niklas Muennighoff, Chris Ociepa, Jason Phang, Laria Reynolds, Hailey Schoelkopf, Aviya Skowron, Lintang Sutawika, Eric Tang, Anish Thite, Ben Wang, Kevin Wang, and Andy Zou.
\newblock The language model evaluation harness, 07 2024.
\newblock URL \url{https://zenodo.org/records/12608602}.

\bibitem[Gunawardana et~al.(2005)Gunawardana, Byrne, and Jordan]{gunawardana2005convergence}
Asela Gunawardana, William Byrne, and Michael~I Jordan.
\newblock Convergence theorems for generalized alternating minimization procedures.
\newblock \emph{Journal of machine learning research}, 6\penalty0 (12), 2005.

\bibitem[Guo et~al.(2025{\natexlab{a}})Guo, Yang, Zhang, Song, Zhang, Xu, Zhu, Ma, Wang, Bi, et~al.]{guo2025deepseek}
Daya Guo, Dejian Yang, Haowei Zhang, Junxiao Song, Ruoyu Zhang, Runxin Xu, Qihao Zhu, Shirong Ma, Peiyi Wang, Xiao Bi, et~al.
\newblock Deepseek-r1: Incentivizing reasoning capability in llms via reinforcement learning.
\newblock \emph{arXiv preprint arXiv:2501.12948}, 2025{\natexlab{a}}.

\bibitem[Guo et~al.(2025{\natexlab{b}})Guo, Zeng, Zhu, Wang, Wang, Zhou, Zhao, Liu, Ma, Wang, et~al.]{guo2025comprehensive}
Haiyang Guo, Fanhu Zeng, Fei Zhu, Jiayi Wang, Xukai Wang, Jingang Zhou, Hongbo Zhao, Wenzhuo Liu, Shijie Ma, Da-Han Wang, et~al.
\newblock A comprehensive survey on continual learning in generative models.
\newblock \emph{arXiv preprint arXiv:2506.13045}, 2025{\natexlab{b}}.

\bibitem[Han et~al.(2025)Han, Pari, Gershman, and Agrawal]{han2025general}
Seungwook Han, Jyothish Pari, Samuel~J Gershman, and Pulkit Agrawal.
\newblock General reasoning requires learning to reason from the get-go.
\newblock \emph{arXiv preprint arXiv:2502.19402}, 2025.

\bibitem[Hendrycks et~al.(2020)Hendrycks, Burns, Basart, Zou, Mazeika, Song, and Steinhardt]{hendrycks2020measuring}
Dan Hendrycks, Collin Burns, Steven Basart, Andy Zou, Mantas Mazeika, Dawn Song, and Jacob Steinhardt.
\newblock Measuring massive multitask language understanding.
\newblock \emph{arXiv preprint arXiv:2009.03300}, 2020.

\bibitem[Hou et~al.(2019)Hou, Pan, Loy, Wang, and Lin]{hou2019learning}
Saihui Hou, Xinyu Pan, Chen~Change Loy, Zilei Wang, and Dahua Lin.
\newblock Learning a unified classifier incrementally via rebalancing.
\newblock In \emph{Proceedings of the IEEE/CVF conference on computer vision and pattern recognition}, pp.\  831--839, 2019.

\bibitem[Howard \& Ruder(2018)Howard and Ruder]{howard2018universal}
Jeremy Howard and Sebastian Ruder.
\newblock Universal language model fine-tuning for text classification.
\newblock \emph{arXiv preprint arXiv:1801.06146}, 2018.

\bibitem[Hu et~al.(2025)Hu, Zhang, Han, Jiang, Zhang, and Shum]{hu2025open}
Jingcheng Hu, Yinmin Zhang, Qi~Han, Daxin Jiang, Xiangyu Zhang, and Heung-Yeung Shum.
\newblock Open-reasoner-zero: An open source approach to scaling up reinforcement learning on the base model.
\newblock \emph{arXiv preprint arXiv:2503.24290}, 2025.

\bibitem[Hu et~al.(2023)Hu, Xie, Jain, Francis, Patrikar, Keetha, Kim, Xie, Zhang, Fang, et~al.]{hu2023toward}
Yafei Hu, Quanting Xie, Vidhi Jain, Jonathan Francis, Jay Patrikar, Nikhil Keetha, Seungchan Kim, Yaqi Xie, Tianyi Zhang, Hao-Shu Fang, et~al.
\newblock Toward general-purpose robots via foundation models: A survey and meta-analysis.
\newblock \emph{arXiv preprint arXiv:2312.08782}, 2023.

\bibitem[Huan et~al.(2025)Huan, Li, Zheng, Xu, Kim, Du, Poovendran, Neubig, and Yue]{huan2025does}
Maggie Huan, Yuetai Li, Tuney Zheng, Xiaoyu Xu, Seungone Kim, Minxin Du, Radha Poovendran, Graham Neubig, and Xiang Yue.
\newblock Does math reasoning improve general llm capabilities? understanding transferability of llm reasoning.
\newblock \emph{arXiv preprint arXiv:2507.00432}, 2025.

\bibitem[Huh et~al.(2024)Huh, Cheung, Wang, and Isola]{huh2024platonic}
Minyoung Huh, Brian Cheung, Tongzhou Wang, and Phillip Isola.
\newblock The platonic representation hypothesis.
\newblock \emph{arXiv preprint arXiv:2405.07987}, 2024.

\bibitem[Jung et~al.(2018)Jung, Ju, Jung, and Kim]{jung2018less}
Heechul Jung, Jeongwoo Ju, Minju Jung, and Junmo Kim.
\newblock Less-forgetful learning for domain expansion in deep neural networks.
\newblock In \emph{Proceedings of the AAAI conference on artificial intelligence}, volume~32, 2018.

\bibitem[Kim et~al.(2024)Kim, Pertsch, Karamcheti, Xiao, Balakrishna, Nair, Rafailov, Foster, Lam, Sanketi, et~al.]{kim2024openvla}
Moo~Jin Kim, Karl Pertsch, Siddharth Karamcheti, Ted Xiao, Ashwin Balakrishna, Suraj Nair, Rafael Rafailov, Ethan Foster, Grace Lam, Pannag Sanketi, et~al.
\newblock Openvla: An open-source vision-language-action model.
\newblock \emph{arXiv preprint arXiv:2406.09246}, 2024.

\bibitem[Kirkpatrick et~al.(2017)Kirkpatrick, Pascanu, Rabinowitz, Veness, Desjardins, Rusu, Milan, Quan, Ramalho, Grabska-Barwinska, et~al.]{kirkpatrick2017overcoming}
James Kirkpatrick, Razvan Pascanu, Neil Rabinowitz, Joel Veness, Guillaume Desjardins, Andrei~A Rusu, Kieran Milan, John Quan, Tiago Ramalho, Agnieszka Grabska-Barwinska, et~al.
\newblock Overcoming catastrophic forgetting in neural networks.
\newblock \emph{Proceedings of the national academy of sciences}, 114\penalty0 (13):\penalty0 3521--3526, 2017.

\bibitem[Kornblith et~al.(2019)Kornblith, Norouzi, Lee, and Hinton]{kornblith2019similarity}
Simon Kornblith, Mohammad Norouzi, Honglak Lee, and Geoffrey Hinton.
\newblock Similarity of neural network representations revisited.
\newblock In \emph{International conference on machine learning}, pp.\  3519--3529. PMlR, 2019.

\bibitem[Lai et~al.(2025)Lai, Zhao, Feng, Ma, Liu, Zhao, Lin, Yi, Xie, Zhang, et~al.]{lai2025reinforcement}
Song Lai, Haohan Zhao, Rong Feng, Changyi Ma, Wenzhuo Liu, Hongbo Zhao, Xi~Lin, Dong Yi, Min Xie, Qingfu Zhang, et~al.
\newblock Reinforcement fine-tuning naturally mitigates forgetting in continual post-training.
\newblock \emph{arXiv preprint arXiv:2507.05386}, 2025.

\bibitem[Li et~al.(2024{\natexlab{a}})Li, Gan, Yang, Yang, Li, Wang, Gao, et~al.]{li2024multimodal}
Chunyuan Li, Zhe Gan, Zhengyuan Yang, Jianwei Yang, Linjie Li, Lijuan Wang, Jianfeng Gao, et~al.
\newblock Multimodal foundation models: From specialists to general-purpose assistants.
\newblock \emph{Foundations and Trends{\textregistered} in Computer Graphics and Vision}, 16\penalty0 (1-2):\penalty0 1--214, 2024{\natexlab{a}}.

\bibitem[Li et~al.(2025{\natexlab{a}})Li, Zhang, Rao, and Cheng]{li2025unveiling}
Tianle Li, Jihai Zhang, Yongming Rao, and Yu~Cheng.
\newblock Unveiling the compositional ability gap in vision-language reasoning model.
\newblock \emph{arXiv preprint arXiv:2505.19406}, 2025{\natexlab{a}}.

\bibitem[Li et~al.(2024{\natexlab{b}})Li, Hsu, Gu, Pertsch, Mees, Walke, Fu, Lunawat, Sieh, Kirmani, Levine, Wu, Finn, Su, Vuong, and Xiao]{li24simpler}
Xuanlin Li, Kyle Hsu, Jiayuan Gu, Karl Pertsch, Oier Mees, Homer~Rich Walke, Chuyuan Fu, Ishikaa Lunawat, Isabel Sieh, Sean Kirmani, Sergey Levine, Jiajun Wu, Chelsea Finn, Hao Su, Quan Vuong, and Ted Xiao.
\newblock Evaluating real-world robot manipulation policies in simulation.
\newblock \emph{arXiv preprint arXiv:2405.05941}, 2024{\natexlab{b}}.

\bibitem[Li \& Hoiem(2017)Li and Hoiem]{li2017learning}
Zhizhong Li and Derek Hoiem.
\newblock Learning without forgetting.
\newblock \emph{IEEE transactions on pattern analysis and machine intelligence}, 40\penalty0 (12):\penalty0 2935--2947, 2017.

\bibitem[Li et~al.(2025{\natexlab{b}})Li, Li, and Zhou]{Li2025C3POCC}
Zhongyang Li, Ziyue Li, and Tianyi Zhou.
\newblock C3po: Critical-layer, core-expert, collaborative pathway optimization for test-time expert re-mixing.
\newblock \emph{ArXiv}, abs/2504.07964, 2025{\natexlab{b}}.
\newblock URL \url{https://api.semanticscholar.org/CorpusID:277667633}.

\bibitem[Lin et~al.(2021)Lin, Hilton, and Evans]{lin2021truthfulqa}
Stephanie Lin, Jacob Hilton, and Owain Evans.
\newblock Truthfulqa: Measuring how models mimic human falsehoods.
\newblock \emph{arXiv preprint arXiv:2109.07958}, 2021.

\bibitem[Liu et~al.(2025)Liu, Chen, Li, Qi, Pang, Du, Lee, and Lin]{liu2025understanding}
Zichen Liu, Changyu Chen, Wenjun Li, Penghui Qi, Tianyu Pang, Chao Du, Wee~Sun Lee, and Min Lin.
\newblock Understanding r1-zero-like training: A critical perspective.
\newblock \emph{arXiv preprint arXiv:2503.20783}, 2025.

\bibitem[Luo et~al.(2023)Luo, Yang, Meng, Li, Zhou, and Zhang]{luo2023empirical}
Yun Luo, Zhen Yang, Fandong Meng, Yafu Li, Jie Zhou, and Yue Zhang.
\newblock An empirical study of catastrophic forgetting in large language models during continual fine-tuning.
\newblock \emph{arXiv preprint arXiv:2308.08747}, 2023.

\bibitem[McCloskey \& Cohen(1989)McCloskey and Cohen]{mccloskey1989catastrophic}
Michael McCloskey and Neal~J Cohen.
\newblock Catastrophic interference in connectionist networks: The sequential learning problem.
\newblock In \emph{Psychology of learning and motivation}, volume~24, pp.\  109--165. Elsevier, 1989.

\bibitem[Meng et~al.(2024)Meng, Xia, and Chen]{meng2024simpo}
Yu~Meng, Mengzhou Xia, and Danqi Chen.
\newblock Simpo: Simple preference optimization with a reference-free reward.
\newblock \emph{Advances in Neural Information Processing Systems}, 37:\penalty0 124198--124235, 2024.

\bibitem[Moradi et~al.(2025)Moradi, Amer, Mudur, Zhang, Liu, and Ahmed]{Moradi2025ContinuousSO}
Mohammad~Mahdi Moradi, Hossam Amer, Sudhir Mudur, Weiwei Zhang, Yang Liu, and Walid Ahmed.
\newblock Continuous self-improvement of large language models by test-time training with verifier-driven sample selection.
\newblock \emph{ArXiv}, abs/2505.19475, 2025.
\newblock URL \url{https://api.semanticscholar.org/CorpusID:278905330}.

\bibitem[Mukherjee et~al.(2025)Mukherjee, Yuan, Hakkani-Tur, and Peng]{mukherjee2025reinforcement}
Sagnik Mukherjee, Lifan Yuan, Dilek Hakkani-Tur, and Hao Peng.
\newblock Reinforcement learning finetunes small subnetworks in large language models.
\newblock \emph{arXiv preprint arXiv:2505.11711}, 2025.

\bibitem[Ouyang et~al.(2022)Ouyang, Wu, Jiang, Almeida, Wainwright, Mishkin, Zhang, Agarwal, Slama, Ray, et~al.]{ouyang2022training}
Long Ouyang, Jeffrey Wu, Xu~Jiang, Diogo Almeida, Carroll Wainwright, Pamela Mishkin, Chong Zhang, Sandhini Agarwal, Katarina Slama, Alex Ray, et~al.
\newblock Training language models to follow instructions with human feedback.
\newblock \emph{Advances in neural information processing systems}, 35:\penalty0 27730--27744, 2022.

\bibitem[Qwen et~al.(2025)Qwen, :, Yang, Yang, Zhang, Hui, Zheng, Yu, Li, Liu, Huang, Wei, Lin, Yang, Tu, Zhang, Yang, Yang, Zhou, Lin, Dang, Lu, Bao, Yang, Yu, Li, Xue, Zhang, Zhu, Men, Lin, Li, Tang, Xia, Ren, Ren, Fan, Su, Zhang, Wan, Liu, Cui, Zhang, and Qiu]{qwen2025qwen25technicalreport}
Qwen, :, An~Yang, Baosong Yang, Beichen Zhang, Binyuan Hui, Bo~Zheng, Bowen Yu, Chengyuan Li, Dayiheng Liu, Fei Huang, Haoran Wei, Huan Lin, Jian Yang, Jianhong Tu, Jianwei Zhang, Jianxin Yang, Jiaxi Yang, Jingren Zhou, Junyang Lin, Kai Dang, Keming Lu, Keqin Bao, Kexin Yang, Le~Yu, Mei Li, Mingfeng Xue, Pei Zhang, Qin Zhu, Rui Men, Runji Lin, Tianhao Li, Tianyi Tang, Tingyu Xia, Xingzhang Ren, Xuancheng Ren, Yang Fan, Yang Su, Yichang Zhang, Yu~Wan, Yuqiong Liu, Zeyu Cui, Zhenru Zhang, and Zihan Qiu.
\newblock Qwen2.5 technical report, 2025.
\newblock URL \url{https://arxiv.org/abs/2412.15115}.

\bibitem[Radford et~al.(2018)Radford, Narasimhan, Salimans, Sutskever, et~al.]{radford2018improving}
Alec Radford, Karthik Narasimhan, Tim Salimans, Ilya Sutskever, et~al.
\newblock Improving language understanding by generative pre-training.
\newblock \emph{arXiv preprint arXiv:2303.08774}, 2018.

\bibitem[Radford et~al.(2021)Radford, Kim, Hallacy, Ramesh, Goh, Agarwal, Sastry, Askell, Mishkin, Clark, et~al.]{radford2021learning}
Alec Radford, Jong~Wook Kim, Chris Hallacy, Aditya Ramesh, Gabriel Goh, Sandhini Agarwal, Girish Sastry, Amanda Askell, Pamela Mishkin, Jack Clark, et~al.
\newblock Learning transferable visual models from natural language supervision.
\newblock In \emph{International conference on machine learning}, pp.\  8748--8763. PmLR, 2021.

\bibitem[Rafailov et~al.(2023)Rafailov, Sharma, Mitchell, Manning, Ermon, and Finn]{rafailov2023direct}
Rafael Rafailov, Archit Sharma, Eric Mitchell, Christopher~D Manning, Stefano Ermon, and Chelsea Finn.
\newblock Direct preference optimization: Your language model is secretly a reward model.
\newblock \emph{Advances in neural information processing systems}, 36:\penalty0 53728--53741, 2023.

\bibitem[Ramasesh et~al.(2021)Ramasesh, Lewkowycz, and Dyer]{ramasesh2021effect}
Vinay~Venkatesh Ramasesh, Aitor Lewkowycz, and Ethan Dyer.
\newblock Effect of scale on catastrophic forgetting in neural networks.
\newblock In \emph{International conference on learning representations}, 2021.

\bibitem[Rannen et~al.(2017)Rannen, Aljundi, Blaschko, and Tuytelaars]{rannen2017encoder}
Amal Rannen, Rahaf Aljundi, Matthew~B Blaschko, and Tinne Tuytelaars.
\newblock Encoder based lifelong learning.
\newblock In \emph{Proceedings of the IEEE international conference on computer vision}, pp.\  1320--1328, 2017.

\bibitem[Rebuffi et~al.(2017)Rebuffi, Kolesnikov, Sperl, and Lampert]{rebuffi2017icarl}
Sylvestre-Alvise Rebuffi, Alexander Kolesnikov, Georg Sperl, and Christoph~H Lampert.
\newblock icarl: Incremental classifier and representation learning.
\newblock In \emph{Proceedings of the IEEE conference on Computer Vision and Pattern Recognition}, pp.\  2001--2010, 2017.

\bibitem[Ross et~al.(2011)Ross, Gordon, and Bagnell]{ross2011reduction}
St{\'e}phane Ross, Geoffrey Gordon, and Drew Bagnell.
\newblock A reduction of imitation learning and structured prediction to no-regret online learning.
\newblock In \emph{Proceedings of the fourteenth international conference on artificial intelligence and statistics}, pp.\  627--635. JMLR Workshop and Conference Proceedings, 2011.

\bibitem[Sakaguchi et~al.(2021)Sakaguchi, Bras, Bhagavatula, and Choi]{sakaguchi2021winogrande}
Keisuke Sakaguchi, Ronan~Le Bras, Chandra Bhagavatula, and Yejin Choi.
\newblock Winogrande: An adversarial winograd schema challenge at scale.
\newblock \emph{Communications of the ACM}, 64\penalty0 (9):\penalty0 99--106, 2021.

\bibitem[Shao et~al.(2024)Shao, Wang, Zhu, Xu, Song, Bi, Zhang, Zhang, Li, Wu, et~al.]{shao2024deepseekmath}
Zhihong Shao, Peiyi Wang, Qihao Zhu, Runxin Xu, Junxiao Song, Xiao Bi, Haowei Zhang, Mingchuan Zhang, YK~Li, Yang Wu, et~al.
\newblock Deepseekmath: Pushing the limits of mathematical reasoning in open language models.
\newblock \emph{arXiv preprint arXiv:2402.03300}, 2024.

\bibitem[Simonds \& Yoshiyama(2025)Simonds and Yoshiyama]{simonds2025ladder}
Toby Simonds and Akira Yoshiyama.
\newblock Ladder: Self-improving llms through recursive problem decomposition.
\newblock \emph{arXiv preprint arXiv:2503.00735}, 2025.

\bibitem[Stiennon et~al.(2020)Stiennon, Ouyang, Wu, Ziegler, Lowe, Voss, Radford, Amodei, and Christiano]{stiennon2020learning}
Nisan Stiennon, Long Ouyang, Jeffrey Wu, Daniel Ziegler, Ryan Lowe, Chelsea Voss, Alec Radford, Dario Amodei, and Paul~F Christiano.
\newblock Learning to summarize with human feedback.
\newblock \emph{Advances in neural information processing systems}, 33:\penalty0 3008--3021, 2020.

\bibitem[Sutton et~al.(1998)Sutton, Barto, et~al.]{sutton1998reinforcement}
Richard~S Sutton, Andrew~G Barto, et~al.
\newblock \emph{Reinforcement learning: An introduction}, volume~1.
\newblock MIT press Cambridge, 1998.

\bibitem[Tang et~al.(2023)Tang, Deng, Lin, Han, Liang, Cao, and Sun]{tang2023toolalpaca}
Qiaoyu Tang, Ziliang Deng, Hongyu Lin, Xianpei Han, Qiao Liang, Boxi Cao, and Le~Sun.
\newblock Toolalpaca: Generalized tool learning for language models with 3000 simulated cases.
\newblock \emph{arXiv preprint arXiv:2306.05301}, 2023.

\bibitem[Touvron et~al.(2023)Touvron, Lavril, Izacard, Martinet, Lachaux, Lacroix, Rozi{\`e}re, Goyal, Hambro, Azhar, et~al.]{touvron2023llama}
Hugo Touvron, Thibaut Lavril, Gautier Izacard, Xavier Martinet, Marie-Anne Lachaux, Timoth{\'e}e Lacroix, Baptiste Rozi{\`e}re, Naman Goyal, Eric Hambro, Faisal Azhar, et~al.
\newblock Llama: Open and efficient foundation language models.
\newblock \emph{arXiv preprint arXiv:2302.13971}, 2023.

\bibitem[Wang et~al.(2024)Wang, Zhang, Su, and Zhu]{wang2024comprehensive}
Liyuan Wang, Xingxing Zhang, Hang Su, and Jun Zhu.
\newblock A comprehensive survey of continual learning: Theory, method and application.
\newblock \emph{IEEE transactions on pattern analysis and machine intelligence}, 46\penalty0 (8):\penalty0 5362--5383, 2024.

\bibitem[Wei et~al.(2021)Wei, Bosma, Zhao, Guu, Yu, Lester, Du, Dai, and Le]{wei2021finetuned}
Jason Wei, Maarten Bosma, Vincent~Y Zhao, Kelvin Guu, Adams~Wei Yu, Brian Lester, Nan Du, Andrew~M Dai, and Quoc~V Le.
\newblock Finetuned language models are zero-shot learners.
\newblock \emph{arXiv preprint arXiv:2109.01652}, 2021.

\bibitem[Wu(1983)]{wu1983convergence}
CF~Jeff Wu.
\newblock On the convergence properties of the em algorithm.
\newblock \emph{The Annals of statistics}, pp.\  95--103, 1983.

\bibitem[Wu et~al.(2019)Wu, Chen, Wang, Ye, Liu, Guo, and Fu]{wu2019large}
Yue Wu, Yinpeng Chen, Lijuan Wang, Yuancheng Ye, Zicheng Liu, Yandong Guo, and Yun Fu.
\newblock Large scale incremental learning.
\newblock In \emph{Proceedings of the IEEE/CVF conference on computer vision and pattern recognition}, pp.\  374--382, 2019.

\bibitem[Xiao et~al.(2017)Xiao, Rasul, and Vollgraf]{xiao2017fashion}
Han Xiao, Kashif Rasul, and Roland Vollgraf.
\newblock Fashion-mnist: a novel image dataset for benchmarking machine learning algorithms.
\newblock \emph{arXiv preprint arXiv:1708.07747}, 2017.

\bibitem[Zellers et~al.(2019)Zellers, Holtzman, Bisk, Farhadi, and Choi]{zellers2019hellaswag}
Rowan Zellers, Ari Holtzman, Yonatan Bisk, Ali Farhadi, and Yejin Choi.
\newblock Hellaswag: Can a machine really finish your sentence?
\newblock \emph{arXiv preprint arXiv:1905.07830}, 2019.

\bibitem[Zenke et~al.(2017)Zenke, Poole, and Ganguli]{zenke2017continual}
Friedemann Zenke, Ben Poole, and Surya Ganguli.
\newblock Continual learning through synaptic intelligence.
\newblock In \emph{International conference on machine learning}, pp.\  3987--3995. PMLR, 2017.

\bibitem[Zhai et~al.(2024)Zhai, Bai, Lin, Pan, Tong, Zhou, Suhr, Xie, LeCun, Ma, et~al.]{zhai2024fine}
Simon Zhai, Hao Bai, Zipeng Lin, Jiayi Pan, Peter Tong, Yifei Zhou, Alane Suhr, Saining Xie, Yann LeCun, Yi~Ma, et~al.
\newblock Fine-tuning large vision-language models as decision-making agents via reinforcement learning.
\newblock \emph{Advances in neural information processing systems}, 37:\penalty0 110935--110971, 2024.

\bibitem[Zhou et~al.(2023)Zhou, Lu, Mishra, Brahma, Basu, Luan, Zhou, and Hou]{zhou2023instruction}
Jeffrey Zhou, Tianjian Lu, Swaroop Mishra, Siddhartha Brahma, Sujoy Basu, Yi~Luan, Denny Zhou, and Le~Hou.
\newblock Instruction-following evaluation for large language models.
\newblock \emph{arXiv preprint arXiv:2311.07911}, 2023.

\bibitem[Ziegler et~al.(2019)Ziegler, Stiennon, Wu, Brown, Radford, Amodei, Christiano, and Irving]{ziegler2019fine}
Daniel~M Ziegler, Nisan Stiennon, Jeffrey Wu, Tom~B Brown, Alec Radford, Dario Amodei, Paul Christiano, and Geoffrey Irving.
\newblock Fine-tuning language models from human preferences.
\newblock \emph{arXiv preprint arXiv:1909.08593}, 2019.

\bibitem[Zweiger et~al.(2025)Zweiger, Pari, Guo, Aky{\"u}rek, Kim, and Agrawal]{Zweiger2025SelfAdaptingLM}
Adam Zweiger, Jyothish Pari, Han Guo, Ekin Aky{\"u}rek, Yoon Kim, and Pulkit Agrawal.
\newblock Self-adapting language models.
\newblock \emph{ArXiv}, abs/2506.10943, 2025.
\newblock URL \url{https://api.semanticscholar.org/CorpusID:279318966}.

\end{thebibliography}
\bibliographystyle{iclr2025_conference}

\newpage
\appendix
\section{Theory}
\label{appx:theory}
\begin{lemma}[Rejection sampling as an I-projection]
Let $p$ be a distribution over a finite set $Y$, and let $R:Y \to \{0,1\}$ be a reward function. Rejection sampling from $p$ with acceptance condition $R(y)=1$ yields a distribution $q_{\mathrm{RS}}$. This distribution can be equivalently characterized as the solution to:
$$ 
q_{\text{RS}}=\argmin_q D_{\text{KL}}(q||p) \quad s.t \quad \mathbb E_{y\sim q}[R(y)]=1
$$
Equivalently, $q_{\text{RS}}$ is the I-projection of $p$ onto the set $\{q:\mathbb E_q[R]=1\}$
\end{lemma}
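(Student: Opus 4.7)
The plan is to argue directly from the definition of rejection sampling and then invoke a standard Gibbs-inequality/Pythagorean decomposition of KL. First I would write out the distribution produced by rejection sampling explicitly: acceptance of $y$ exactly when $R(y)=1$ gives
\[
q_{\text{RS}}(y) \;=\; \frac{p(y)\, R(y)}{\mathbb{E}_p[R]},
\]
which is supported on $Y^{+} := \{y : R(y)=1\}$ and requires only the mild regularity assumption $\mathbb{E}_p[R] > 0$ (otherwise the constraint set is empty).

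Next I would simplify the constraint set. Since $R$ takes values in $\{0,1\}$, the condition $\mathbb{E}_{y\sim q}[R(y)]=1$ is equivalent to $q$ being supported on $Y^{+}$. So the optimization reduces to minimizing $D_{\text{KL}}(q\|p)$ over distributions on $Y^{+}$.

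The main step is a one-line Pythagorean identity. For any $q$ supported on $Y^{+}$,
\[
D_{\text{KL}}(q\|p) \;=\; \sum_{y \in Y^{+}} q(y)\log\frac{q(y)}{p(y)}
\;=\; \sum_{y \in Y^{+}} q(y)\log\frac{q(y)}{q_{\text{RS}}(y)} \;+\; \sum_{y \in Y^{+}} q(y)\log\frac{q_{\text{RS}}(y)}{p(y)}.
\]
The first term is $D_{\text{KL}}(q\|q_{\text{RS}}) \ge 0$. For the second, on $Y^{+}$ the ratio $q_{\text{RS}}(y)/p(y) = 1/\mathbb{E}_p[R]$ is a constant independent of $y$, so the term collapses to $-\log \mathbb{E}_p[R]$. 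Therefore
\[
D_{\text{KL}}(q\|p) \;=\; D_{\text{KL}}(q\|q_{\text{RS}}) \;-\; \log \mathbb{E}_p[R],
\]
which is uniquely minimized by $q = q_{\text{RS}}$, proving the claim. Alternatively one can derive the same conclusion by Lagrange multipliers: stationarity of $\sum_y q(y)\log(q(y)/p(y))$ subject to $q$ being a probability distribution on $Y^{+}$ yields $q(y) \propto p(y)$ on $Y^{+}$, which is exactly $q_{\text{RS}}$.

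There is no real obstacle here: the argument is essentially the standard identification of an I-projection onto a linear family of distributions as the Gibbs/exponential-tilt minimizer. The only things that require care are (i) handling the degenerate case $\mathbb{E}_p[R]=0$, where the feasible set is empty and the statement is vacuous, and (ii) being explicit that the minimizer is unique because $D_{\text{KL}}(\cdot\|q_{\text{RS}})$ vanishes only at $q = q_{\text{RS}}$.
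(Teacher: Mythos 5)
Your proof is correct and follows essentially the same route as the paper: reduce the constraint to support on $Y^{+}$, then peel off a constant $-\log\mathbb{E}_p[R]$ (the paper writes it as $-\log p(S)$) so that minimizing $D_{\text{KL}}(q\|p)$ reduces to minimizing $D_{\text{KL}}(q\|q_{\text{RS}})\ge 0$. Your added remarks on the degenerate case $\mathbb{E}_p[R]=0$ and the Lagrange-multiplier alternative are fine but not new ideas.
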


\begin{proof}
Let $S=\{y\in Y: R(y)=1\}$. Rejection sampling produces the conditional distribution
$$
q_{\mathrm{RS}}(y) = \begin{cases}
\tfrac{p(y)}{p(S)} & y\in S, \\
0 & y\notin S,
\end{cases}
$$
where $p(S)=\sum_{y\in S}p(y)$ and we assume $P(S)>0$.

Now consider the optimization problem. The constraint $\mathbb E_q[R]=1$ means
$$
\sum_{y\in Y} q(y)R(y) = \sum_{y\in S} q(y) = 1
$$
so $q$ must put all of its mass on $S$. Thus the feasible set is exactly all distributions supported on $S$.

For any $q$ supported on $S$, we can write $p(y)=p(S)\,p(y|S)$ for $y\in S$, and then
\begin{align*}
D_{\mathrm{KL}}(q\Vert p)
&= \sum_{y\in S} q(y)\log\frac{q(y)}{p(y)}
 = \sum_{y\in S} q(y)\log\frac{q(y)}{p(y\mid S)} - \log p(S)\sum_{y\in S} q(y) \\
&= D_{\mathrm{KL}}\bigl(q\Vert p(\cdot\mid S)\bigr) -\log p(S)
\end{align*}
where we used $\sum_{y\in S} q(y)=1$ in the last step. The second term is constant in $q$, so minimizing $D_{\mathrm{KL}}(q||p)$ is the same as minimizing $D_{\mathrm{KL}}(q||p(\cdot|S))$.
By strict convexity of
$D_{\mathrm{KL}}(\cdot\Vert\cdot)$ in its first argument, the unique minimizer is
$q=p(\cdot\mid S)=q_{\mathrm{RS}}$.
\end{proof}

\begin{lemma}[Policy gradient as an M-projection]
Let $Y$ be a finite set and let $\Pi \subseteq \Delta(Y)$ be a set of admissible policies (distributions over $Y$).
Consider the single-step reinforcement learning objective
$$
\max_{\pi} \mathbb E_{y\sim \pi}[R(y)]
$$
where $R:Y\to \mathbb R_{\ge 0}$ is a reward function. 
By the policy gradient theorem, this objective is equivalently optimized by
$$
\max_{\pi} \mathbb E_{y\sim \bar\pi}\bigl[R(y)\log \pi(y)\bigr]
$$
where $\bar\pi$ indicates that gradients are not propagated through the sampling distribution. 
Define the distribution
$$
q(y) = \frac{\pi(y)R(y)}{Z}, \qquad Z = \sum_{y\in Y} \pi(y)R(y)
$$
Then taking a policy gradient step is equivalent to taking a gradient step on the following objective:
$$
\min_{\pi} -\mathbb E_{y\sim q}[\log \pi(y)]
$$
In other words, optimizing the RL objective using policy gradient is equivalent to finding the $M$-projection of $q$ onto the set of feasible policies $\pi$ using gradient descent.
\end{lemma}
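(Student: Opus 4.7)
The statement is really an identity between two gradients, so the plan is to compute both sides explicitly at a fixed iterate $\pi$ and show that they are parallel vectors in parameter space, differing only by the positive scalar $Z = \sum_y \pi(y) R(y)$. Once the gradients are proportional with a positive constant, a gradient step on either objective produces the same update direction (up to a learning-rate rescaling), which is what ``taking a gradient step is equivalent'' means here.

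\paragraph{Key steps.}
First I would parametrize $\pi = \pi_\theta$ and recall the surrogate produced by the policy gradient theorem: treating the sampling distribution as fixed (the $\bar\pi$ notation), the gradient of $J(\theta) = \mathbb{E}_{y\sim\pi_\theta}[R(y)]$ is
\begin{equation*}
    \nabla_\theta J(\theta) \;=\; \sum_{y\in Y} \pi_\theta(y)\, R(y)\, \nabla_\theta \log \pi_\theta(y)
    \;=\; \nabla_\theta\, \mathbb{E}_{y\sim \bar\pi}\bigl[R(y)\log \pi_\theta(y)\bigr],
\end{equation*}
where the last equality uses that $\bar\pi$ blocks the gradient from acting on the sampling weights. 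Second, I would compute the gradient of the M-projection surrogate with $q$ held fixed (i.e.\ formed from the current iterate and detached):
\begin{equation*}
    -\nabla_\theta\, \mathbb{E}_{y\sim q}\bigl[-\log \pi_\theta(y)\bigr]
    \;=\; \sum_{y\in Y} q(y)\, \nabla_\theta \log \pi_\theta(y)
    \;=\; \frac{1}{Z}\sum_{y\in Y} \pi_\theta(y)\, R(y)\, \nabla_\theta \log \pi_\theta(y).
\end{equation*}
Third, I would combine the two displays to conclude $\nabla_\theta J(\theta) = Z \cdot \nabla_\theta \mathbb{E}_{y\sim q}[\log \pi_\theta(y)]$, and note that $Z>0$ whenever $\pi$ assigns positive mass to some $y$ with $R(y)>0$ (otherwise both gradients vanish and there is nothing to prove). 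Finally I would interpret $\mathbb{E}_{y\sim q}[\log \pi_\theta(y)]$ as (up to a constant in $\theta$) the negative cross-entropy $-H(q,\pi_\theta) = -\mathrm{KL}(q\Vert\pi_\theta) - H(q)$, so that ascent on this quantity is exactly an M-projection step of $\pi_\theta$ toward $q$.

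\paragraph{Main subtlety.}
The real care is in tracking what is differentiated and what is detached. The normalizer $Z$ and the weights $q(y)$ both depend on $\pi$, yet in the policy-gradient estimator they are formed from the \emph{current} iterate and never backpropagated through; conversely, in the cross-entropy surrogate $q$ is likewise treated as data. I would make this explicit by writing all expectations with a detached distribution (the $\bar\pi$ convention already used in the statement) before differentiating, so that the two calculations above are clearly performed under the same stop-gradient convention. The remaining points --- that proportionality of gradients means equivalence of descent directions, and that the constraint $\pi\in\Pi$ is enforced implicitly by the parametrization --- are essentially routine once the stop-gradient bookkeeping is fixed.
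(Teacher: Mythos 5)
Your proposal is correct and follows essentially the same route as the paper's proof: both rest on the factorization $\pi(y)R(y) = Z\,q(y)$ with $Z$ and $q$ held fixed under the stop-gradient convention, and both identify maximizing $\mathbb{E}_{y\sim q}[\log\pi(y)]$ with minimizing $D_{\mathrm{KL}}(q\Vert\pi)$ since the entropy of $q$ is constant in $\pi$. The only (cosmetic) difference is that you verify the equivalence at the level of gradients while the paper rewrites the surrogate objective itself as $Z\,\mathbb{E}_{y\sim q}[\log\pi(y)]$; your explicit treatment of the detachment of $Z$ and $q$, and of the degenerate case $Z=0$, is a welcome clarification but not a different argument.
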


\begin{proof}
Expanding the policy gradient objective gives
$$
\mathbb E_{y\sim \bar\pi}[R(y)\log \pi(y)] 
= \sum_{y\in Y} \pi(y) R(y) \log \pi(y)
$$
Let $Z=\sum_{y\in Y} \pi(y)R(y)$. Define $q(y)=\pi(y)R(y)/Z$. Then the above becomes
$$
\sum_{y\in Y} \pi(y) R(y) \log \pi(y) = Z \sum_{y\in Y} q(y)\log \pi(y)
= Z \, \mathbb E_{y\sim q}[\log \pi(y)]
$$
Since $Z$ does not depend on $\pi$ in the gradient computation (it is treated as a constant in the $\bar\pi$ sense), maximizing the original objective is equivalent to maximizing $\mathbb E_{y\sim q}[\log \pi(y)]$.

Finally, recall that the $M$-projection of a distribution $q$ onto a set of distributions $\Pi$ is given by
$$
\min_{\pi\in\Pi}\mathrm{KL}(q\|\pi) = \mathbb{E}_{q}[\log\frac{q}{\pi}]\ = \mathbb{E}_{q}[\log q]\; -\; \mathbb{E}_{q}[\log \pi]
$$
since $\mathbb{E}_{q}[\log q]$ does not depend on $\pi$, the maximizer of $\mathbb{E}_{\bar\pi}[R\log\pi]$ over $\Pi$ coincides with $\arg\min_{\pi\in\Pi}\mathrm{KL}(q\|\pi)$.
Thus, the policy gradient update corresponds to the $M$-projection of $q$ onto the policy class.
\end{proof}

\begin{theorem}[RL with binary reward as an EM algorithm]
\label{theorem}
    Let $Y$ be a finite set and let $\Pi \subseteq \Delta(Y)$ be a set of feasible policies. Let $R:Y \to \{0,1\}$ be a binary reward function and $P^*$ the set of all optimal policies $P^*=\{q:\mathbb E_q[R]=1\}$. Then, solving the Single-step reinforcement learning objective using policy gradients is equivalent to performing the following optimization procedure:
    $$ q_t=\arg\min_{q\in P^*}\mathrm{KL}(q\|\pi_t),\qquad \pi_{t+1}=\arg\min_{\pi\in \Pi}\mathrm{KL}(q_t\|\pi) $$
    This procedure is also known as EM with information projection.
\end{theorem}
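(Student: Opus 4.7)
The plan is to obtain the theorem as an essentially mechanical composition of the two preceding lemmas: the first identifies the I-projection step (rejection sampling against the binary reward), and the second identifies the M-projection step (the policy gradient update). Since both lemmas are already established, the substantive work is to recognize that the two halves of one EM iteration are exactly these two projections, chained so that the output of one feeds into the other.

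First I would invoke the rejection-sampling lemma with $p=\pi_t$ and the given binary reward $R$. The constraint set $\{q:\mathbb{E}_q[R]=1\}$ used there is precisely $P^*$, so the lemma directly yields
$$q_t \;=\; \argmin_{q \in P^*} \mathrm{KL}(q\,\|\,\pi_t) \;=\; \frac{\pi_t(y)\,R(y)}{Z_t}, \qquad Z_t \;=\; \sum_{y\in Y}\pi_t(y)\,R(y),$$
i.e.\ the distribution implicitly targeted by policy gradient at iteration $t$ (samples from $\pi_t$ reweighted by $R(y)$ and renormalized) coincides with the KL-closest optimal distribution to $\pi_t$. I would then apply the policy-gradient-as-M-projection lemma with this very $q_t$. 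That lemma shows that optimizing $\mathbb{E}_{y\sim\bar\pi_t}[R(y)\log\pi(y)]$ over $\pi\in\Pi$ is equivalent, up to the multiplicative factor $Z_t$ and the $\pi$-independent entropy of $q_t$, to minimizing $\mathrm{KL}(q_t\,\|\,\pi)$ over $\pi\in\Pi$. Identifying the result with $\pi_{t+1}=\argmin_{\pi\in\Pi}\mathrm{KL}(q_t\,\|\,\pi)$ closes the second half of the iteration; composing and iterating over $t$ gives exactly the alternating I/M-projection procedure claimed.

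The main obstacle I expect is the gap between a single stochastic policy-gradient update and the $\argmin$ appearing in the M-projection: Lemma 2 equates the two objectives and hence their gradients, but the theorem writes a full minimization rather than one gradient step. Making this rigorous either restricts $\Pi$ to a family on which the M-projection is attained in one natural-gradient step (for instance, an exponential family in its natural parameterization), or reads the theorem in the inner-loop limit where gradient flow on $\mathrm{KL}(q_t\,\|\,\pi)$ is run to convergence before $q_t$ is recomputed. In either interpretation, convexity of the inner KL objective in $\pi$ together with convexity of $\Pi$ ensures the $\argmin$ is well-defined and attained by the flow, which would complete the proof.
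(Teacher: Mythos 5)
Your proof takes essentially the same approach as the paper's: compose Lemma A.1 (rejection sampling as the I-projection onto $P^*$, with $p \leftarrow \pi_t$) with Lemma A.2 (policy gradient as the M-projection of $q_t$ onto $\Pi$), and alternate to obtain the EM-with-information-projection iteration. You also correctly flag a subtlety that the paper's terse proof glosses over --- Lemma A.2 only equates gradient \emph{steps}, not full argmins --- and your proposed remedies (natural-gradient step on an exponential family, or running the inner KL minimization to convergence) match the regularity caveats the paper defers to the subsequent proposition and its ``practical considerations'' remark.
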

\begin{proof}
Sampling $y\sim \pi$ and accepting iff $R(y)=1$ is exactly rejection sampling onto the event $S=\{y\in Y: R(y)=1\}$. The resulting distribution is $\pi(\cdot| S)$.
By Lemma~A.1 with $p\leftarrow \pi$, this $\pi(\cdot|S)$ solves
\[
\min_{q} D_{\mathrm{KL}}(q\Vert \pi)\quad \text{s.t.}\quad \mathbb{E}_q[R]=1
\]
establishing the I-projection. Applying Lemma A.2 on the RL objective gives us the M-projection.
\end{proof}

\begin{proposition}[Convergence to minimum KL solution]
Under the setting appear in theorem \ref{theorem} and assume $\Pi$ is an e-flat (exponential-family) model with full support, the optimal set $P^*$ is nonempty and realizable (i.e., $\Pi\cap P^*\neq\varnothing$). Then: 

\textnormal{(1)} If the M-projection is exact at every step, then $(\pi_t)$ converges to
$$
\pi^\dagger = \arg\min_{\pi \in P^* \cap \Pi} D_{\mathrm{KL}}(\pi \,\Vert\, \pi_0)
$$
\textnormal{(2)} If the M-projection is inexact but, for some errors $\varepsilon_t\ge 0$, it holds that
$$
D_{\mathrm{KL}}(q_t\|\pi_{t+1}) \;\le\; \min_{\pi\in\Pi} D_{\mathrm{KL}}(q_t\|\pi) \;+\; \varepsilon_t \quad\text{with}\quad \sum_{t=0}^\infty \varepsilon_t<\infty
$$
then $\pi_t$ also converges to the same limit $\pi^\dagger$.
\end{proposition}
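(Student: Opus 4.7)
The plan is to view the iteration as an instance of the Csiszár--Tusnády alternating minimization (the em algorithm in information geometry) and exploit the fact that $P^* = \{q : \mathbb{E}_q[R]=1\}$ is m-flat while $\Pi$ is e-flat. Both projections then satisfy exact Pythagorean decompositions: for any $q\in P^*$,
\[
D_{\mathrm{KL}}(q\|\pi_t) \;=\; D_{\mathrm{KL}}(q\|q_t) + D_{\mathrm{KL}}(q_t\|\pi_t),
\]
and for any $\pi\in\Pi$,
\[
D_{\mathrm{KL}}(q_t\|\pi) \;=\; D_{\mathrm{KL}}(q_t\|\pi_{t+1}) + D_{\mathrm{KL}}(\pi_{t+1}\|\pi).
\]
These two identities drive the whole argument.

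For Part~(1), I would first use $F(q,\pi) = D_{\mathrm{KL}}(q\|\pi)$ as a Lyapunov function. Each step is a coordinate-wise minimization, so $F(q_{t+1},\pi_{t+1}) \le F(q_t,\pi_{t+1}) \le F(q_t,\pi_t)$; combining monotone convergence with the realizability assumption (which forces $\inf F = 0$) and the Pythagorean decompositions yields $D_{\mathrm{KL}}(q_t\|\pi_t) \to 0$. Pinsker's inequality and compactness of the simplex then produce a cluster point $\pi_\infty = q_\infty \in P^*\cap\Pi$. To identify it with $\pi^\dagger$, I would specialize the e-Pythagorean to $\pi=\pi_0\in\Pi$, which yields the telescoping identity
\[
D_{\mathrm{KL}}(\pi_{t+1}\|\pi_0) \;=\; D_{\mathrm{KL}}(q_t\|\pi_0) - D_{\mathrm{KL}}(q_t\|\pi_{t+1}),
\]
and combine it with the m-Pythagorean applied to an arbitrary $q^*\in P^*\cap\Pi$. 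Passing to the limit and using the vanishing of the residual $D_{\mathrm{KL}}(q_t\|\pi_{t+1})$, the resulting relations force $D_{\mathrm{KL}}(\pi_\infty\|\pi_0) = \inf_{\pi\in P^*\cap\Pi} D_{\mathrm{KL}}(\pi\|\pi_0)$. Strict convexity of $D_{\mathrm{KL}}(\cdot\|\pi_0)$ together with the stated regularity assumptions then delivers uniqueness, so $\pi_\infty=\pi^\dagger$ and full convergence follows.

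For Part~(2), I would repeat the argument while tracking the inexact-projection errors as additive perturbations of the Lyapunov descent: $F(q_{t+1},\pi_{t+1}) \le F(q_t,\pi_t) + \varepsilon_t$. Summability $\sum_t\varepsilon_t<\infty$ still yields $F(q_t,\pi_t)\to 0$, and the telescoping identity picks up an accumulated remainder bounded by $\sum_{s\ge t}\varepsilon_s\to 0$, so the identification of the limit carries through to the same $\pi^\dagger$. The main obstacle is the asymmetry between the m-Pythagorean and the e-Pythagorean---they place $q_t$ on opposite sides of $D_{\mathrm{KL}}$---so no single natural Lyapunov such as $D_{\mathrm{KL}}(\pi^\dagger\|\pi_t)$ is monotone along the iteration; the identification step must combine both identities with the vanishing of $F$ to extract the minimum-KL property, and the error bookkeeping in Part~(2) must preserve this coupling without the cumulative errors swamping the descent.
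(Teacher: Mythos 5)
Your approach is the same one the paper uses: cast the iteration as the information-geometric em (Csisz\'ar--Tusn\'ady alternating minimization) between an $m$-flat set $P^*$ and an $e$-flat set $\Pi$, and lean on the two Pythagorean identities. The paper's own ``proof'' is essentially a pointer to \citet{csiszar1984information,amari2000methods} for Part~(1) and to \citet{wu1983convergence,gunawardana2005convergence} for Part~(2); you try to flesh out the argument, which is laudable but exposes the two places where the real work lives, and neither is actually carried out.

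First, the step ``monotone descent $+$ realizability $+$ Pythagorean $\Rightarrow F_t \to 0$'' is not an argument. Monotone descent gives $F_t \downarrow F_\infty \ge 0$, and realizability gives $\inf F = 0$, but these together do not imply $F_\infty = 0$; the iteration could in principle stall at a positive value. What closes this gap in Csisz\'ar--Tusn\'ady theory is the five-point (or three-point $+$ four-point) property, which for KL divergence is supplied precisely by the $m$-flatness of $P^*$; you invoke ``the Pythagorean decompositions'' in this sentence but never show how they yield the conclusion. Second, the identification $\pi_\infty = \pi^\dagger$ is the crux, and your proposed manipulations do not deliver it. Specializing the $e$-Pythagorean to $\pi = \pi_0$ gives $D_{\mathrm{KL}}(\pi_{t+1}\|\pi_0) = D_{\mathrm{KL}}(q_t\|\pi_0) - D_{\mathrm{KL}}(q_t\|\pi_{t+1})$, and the $m$-Pythagorean for $q = q_t$ at $\pi = \pi_0$ gives $D_{\mathrm{KL}}(q_t\|\pi_0) = D_{\mathrm{KL}}(q_t\|q_0) + D_{\mathrm{KL}}(q_0\|\pi_0)$; passing to the limit in either, with $q_t,\pi_{t+1} \to \pi_\infty$ and the residual vanishing, only reproduces the Pythagorean identity for $q_0$ at the limit point --- a tautology that holds for \emph{every} point of $P^*\cap\Pi$, not just $\pi^\dagger$. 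You correctly flag the asymmetry (the $m$-identity puts $q_t$ on the left of $D_{\mathrm{KL}}$, the $e$-identity on the right, so neither $D_{\mathrm{KL}}(\pi^\dagger\|\pi_t)$ nor $D_{\mathrm{KL}}(\pi_t\|\pi^\dagger)$ telescopes cleanly), but ``combine both identities with the vanishing of $F$'' is a statement of hope, not a derivation. To finish, one needs a genuine monotone quantity tied to a fixed $q^* \in P^*\cap\Pi$ (e.g., tracking $D_{\mathrm{KL}}(q^*\|\pi_t)$ and exploiting the four-point property, as in the cited Csisz\'ar/Gunawardana--Byrne machinery); without that, both Part~(1) and your error-perturbed Part~(2) remain at the level of sketches. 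In fairness, the paper's proof is no more than a citation for exactly these steps, so you are at a comparable level of completeness --- but a comparable level is ``incomplete,'' and the specific chain of identities you propose does not, as written, pin down $\pi^\dagger$.
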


\begin{proof}
The I-step is always an exact I-projection (Lemma A.1). In the case of an exact M-step, the iterative process is EM with information projections. The e-/m-flat geometry yields the Pythagorean identities implying convergence to $\pi^\dagger$ \citep{dempster1977maximum,csiszar1984information,amari2000methods}. When the M-step only ensures a (near-)minimization up to summable errors, the iteration is GEM: monotone improvement and convergence follow from the GEM theory of \citet{wu1983convergence} together with generalized alternating minimization for Bregman divergences \citep{gunawardana2005convergence}, which, under the same e-/m-flat assumptions, selects the same minimum-KL limit $\pi^\dagger$.
\end{proof}

\paragraph{Practical considerations.}
Our theoretical equivalence should be interpreted with the following caveats:

\begin{itemize}[leftmargin=*]
    \item Beyond REINFORCE. In practice, many policy gradient algorithms such as GRPO and PPO replace the raw reward $R(y)$ with an advantage estimate $A(y)$. Since this substitution is a control variate technique, it leaves the expected gradient direction unchanged while reducing its variance. Thus, our projection-based interpretation continues to hold.
    \item The optimal policy set $P^*$ defined by the linear constraint $\mathbb E_q[R]=1$ is an $m$-flat family, but the representable policy set $\Pi$ induced by a neural network parametrization is not in general $e$-flat. This may prevent exact convergence to the minimum-KL solution described above. Nevertheless, our theorem provides a principled explanation for the bias observed in practical RL algorithms.
\end{itemize}

\section{Training and Evaluation Details}
\label{appx:hps}
\subsection{LLM experiments}

Unless otherwise stated, all reinforcement learning experiments were conducted using GRPO \citep{shao2024deepseekmath}.

For the \textit{Math} reasoning task, the training set provided final answers but lacked reasoning chains required for SFT training. To obtain these, we queried DeepSeek R1 \citep{guo2025deepseek}, sampling up to 16 responses per prompt and retaining a single response that matched the correct final answer. This yielded valid annotations for 96\% of the dataset. For the \textit{Science Q\&A} task, we applied the same procedure with GPT-4o, obtaining correct annotations for the entire dataset.

To construct the learning–forgetting trade-off curves (e.g., Figure~\ref{fig:main_Res}), we followed the protocol below:
\begin{enumerate}

\item Hyperparameter sweep. We trained multiple models under a broad sweep of hyperparameters (see Table \ref{tab:hyperparameters}).

\item New-task evaluation. For \textit{Math} and \textit{Science Q\&A}, accuracy was measured by comparing the model’s final answer to the ground truth, ignoring intermediate reasoning chains. For Tool Use, we extracted API calls from the output and matched them against ground-truth calls via regular expressions.

\item Previous-task evaluation. We assessed performance on unrelated benchmarks as described in Section~\ref{subsec:main_res}, using the \texttt{Language Model Evaluation Harness} \citep{eval-harness}.

\item Pareto filtering. From the trained models, we retained only those lying within 2 accuracy points of the Pareto frontier.

\item Curve fitting. An exponential function was fit to the filtered points to produce the trade-off curves.

\end{enumerate}

\begin{figure}[h]
    \centering
    \includegraphics[width=\linewidth]{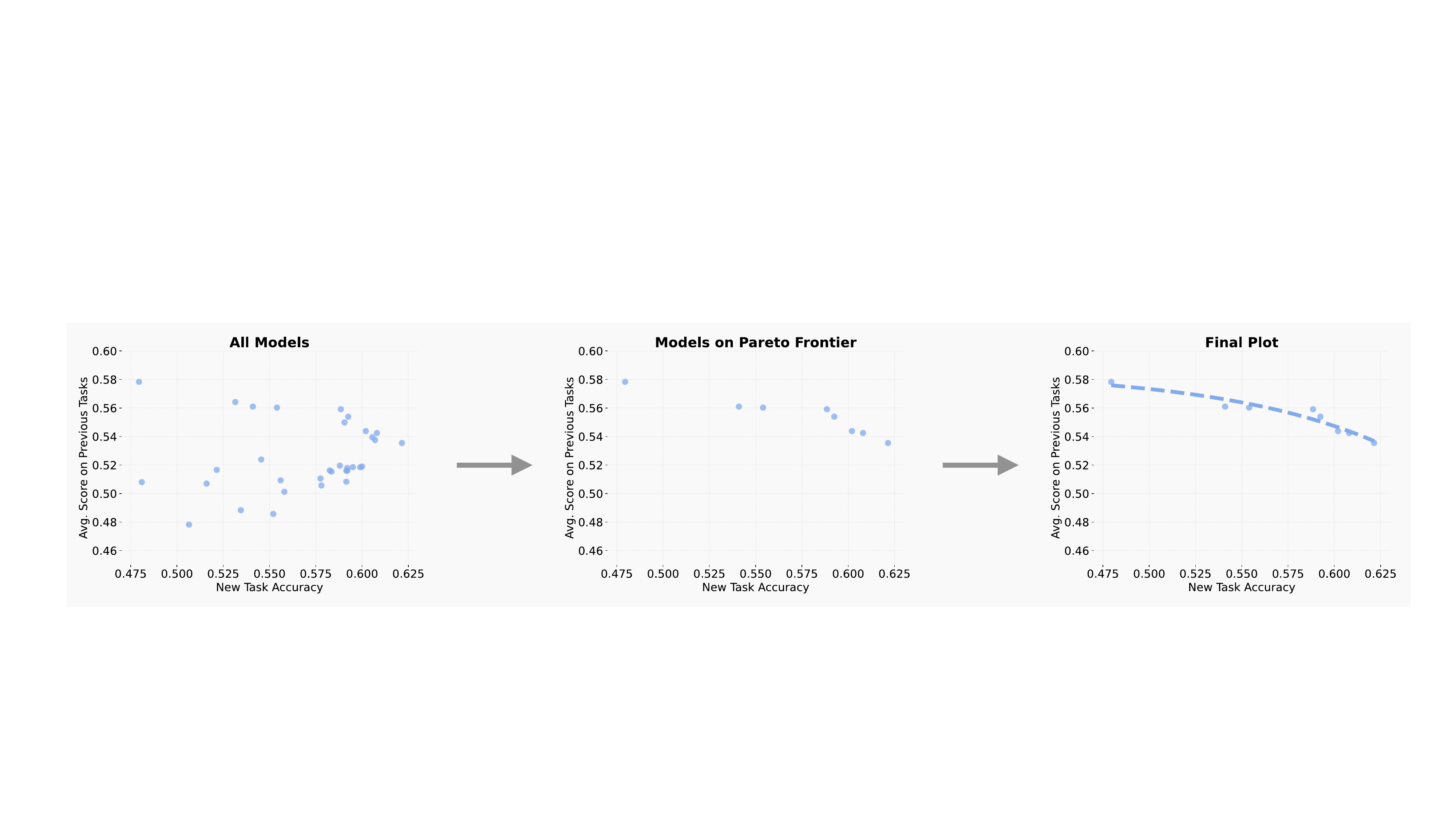}
    \caption{Example for the process of creating the pareto frontier plots}

    \label{fig:methodology}
\end{figure}

\begin{table}[h]
\centering
\begin{tabular}{@{}lll@{}}
\toprule
\textbf{Hyperparameter}         & \textbf{SFT / SIMPO}                            & \textbf{RL}                             \\ \midrule
Base Model             & Qwen2.5 3B-Instruct                    & Qwen2.5 3B-Instruct            \\
Learning Rate          & \{1e-5, 3e-5, 5e-5, 7e-5, 9e-5\}         & \{1e-5, 2e-5, 3e-5, 4e-5, 5e-5\} \\
Optimizer              & adamw                                  & adamw                          \\
LR Scheduler           & \{constant w. warmup, cosine w. warmup\} & constant w. warmup             \\
Warmup steps           & 50                                     & 50                             \\
Epochs                 & \{1,2\}                                  & 1                              \\
Batch Size             & \{16,32,64,128\}                         & See Below                      \\
Max Grad Norm          & 1                                      & 1                              \\
bfloat16               & True                                   & True                           \\
Weight Decay               & 0                                   & 0                           \\
\multicolumn{3}{l}{\textit{GRPO-only hyperparameters}}                                                    \\
KL reg.                &                                        & 0                              \\
Group Size             &                                        & 64                             \\
Prompts per generation &                                        & 8                              \\
num iterations ($\mu$)   &                                        & \{1,2\}                          \\
Loss type              &                                        & Dr. GRPO \citep{liu2025understanding}              \\ \bottomrule
\end{tabular}
\caption{Hyperparameters used for the LLM experiments. Curly braces \{\} indicate a sweep over the specified values. Additional parameters such as weight decay and max gradient norm were manually ablated; since they showed no significant effect on results, they were not included in the final sweep.]}
\label{tab:hyperparameters}
\end{table}

\subsection{Robotic Experiments}
\label{appx:robotics}

We evaluated the RL--SFT forgetting gap in a robotic control setting using the OpenVLA-7B model \citep{kim2024openvla} as our base policy in the SimplerEnv environment \citep{li24simpler}. The fine-tuning task was a pick-and-place scenario requiring the robot to grasp and lift a can, while forgetting was measured on a distinct manipulation task of drawer opening/closing. This setting complements our LLM results by probing whether the KL--forgetting relationship generalizes to embodied policies. To construct the pareto-frontier, we follow the same protocol as in the LLM experiments.

\paragraph{Data Collection.}  
Training data were collected by varying object placement over a $10\times 10$ grid of initial positions:  
\texttt{obj-init-x} $\in [-0.35 -0.12]$, \texttt{obj-init-y} $\in [-0.02, 0.42]$.  
For evaluation, we sampled 100 random object locations uniformly in this area.  

\paragraph{Supervised Fine-Tuning (SFT).}  
For each grid point, we collected 10 successful trajectories using the RT-1 \citep{brohan2022rt} model and filtered for successful trajectories. We trained models with batch sizes $\{16, 32, 64\}$ and learning rates $\{1\!\times\!10^{-6}, 3\!\times\!10^{-6}, 5\!\times\!10^{-6}, 7\!\times\!10^{-6}, 9\!\times\!10^{-6}, 1\!\times\!10^{-5}, 3\!\times\!10^{-5}\}$. Other hyperparameters were: AdamW optimizer, $1$ training epoch, max gradient norm of $1$, weight decay of $0$, warmup of $10$ steps, constant-with-warmup scheduler, and \texttt{bfloat16} precision.

\paragraph{Reinforcement Learning (RL).}  
For RL, we trained using REINFORCE with an reward normalization baseline, without explicit KL regularization. At each iteration, 5 trajectories were collected per grid point. Rewards were binary success indicators of task completion. RL training used the same training config as SFT.

\subsection{MNIST Experiments}  
\label{appx:mnist}

All MNIST experiments were conducted using a 3-layer MLP with input dimension $785$, hidden layers of sizes $512$ and $256$, and output dimension $10$. The input consisted of a flattened $28 \times 28$ image concatenated with a binary indicator: $+1$ for ParityMNIST and $-1$ for FashionMNIST.  

\paragraph{Pretraining.}  
We pretrained the network jointly on ParityMNIST and FashionMNIST using small subsets of the original datasets (500 images from each). For ParityMNIST, the label was chosen uniformly at random among all digit labels with the correct parity.  

\paragraph{Fine-tuning methods.}  
In our experiments, we evaluated five fine-tuning strategies:  
\begin{enumerate}[leftmargin=*]  
    \item \textbf{GRPO}.  
    \item \textbf{GRPO + KL regularization} with coefficient $0.1$.  
    \item \textbf{SFT 1}: all even digits mapped to label 0, all odd digits to label 1.  
    \item \textbf{SFT 2}: even digits randomly mapped to $\{0,4\}$, odd digits to $\{1,5\}$.  
    \item \textbf{SFT with oracle distribution}: annotations drawn from the minimum-KL distribution consistent with task correctness.  
\end{enumerate}  

\paragraph{Oracle distribution.}  
Motivated by the KL--forgetting connection, we define the oracle distribution as the one that achieves perfect task accuracy while remaining closest (in KL divergence) to the pretraining distribution $\pi_0$. Concretely, for an input image $x$ we compute $\pi_0(\cdot|x) \in \mathbb{R}^{10}$ and the binary indicator vector $R \in \{0,1\}^{10}$ encoding which labels are correct given the digit’s parity. The oracle distribution $q^*$ is the solution to:  
\[
q^* = \arg\min_q D_{\mathrm{KL}}(\pi_0\Vert q) 
\quad \text{s.t.} \quad q^\top R = 1.
\]  
Since KL is convex and the constraint is linear, we can calculate a closed-form solution for every image. We then sample from $q^*$ to produce SFT annotations.  

\paragraph{Hyperparameter sweep.}  
For each method we trained models across a sweep of 15 learning rates logarithmically spaced between $3e-6$ and $1e-3$, using either a constant-with-warmup or cosine-with-warmup scheduler, and training for 1 or 2 epochs. Including mid-training checkpoints, this produced approximately 500 runs per method.

\subsection{Centered Kernel Alignmen}
\label{appx:cka}

\paragraph{Centered Kernel Alignment (CKA) \citep{kornblith2019similarity}} 
Given representations $X,Y \in \mathbb{R}^{n \times d}$, define kernels 
$K = XX^\top$, $L = YY^\top$. 
Let $H = I - \tfrac{1}{n}\mathbf{1}\mathbf{1}^\top$ be the centering matrix. 
The centered kernels are 
\[
\bar K = HKH, \quad \bar L = HLH.
\]
CKA is then computed as
\[
\mathrm{CKA}(K,L) \;=\;
\frac{\langle \bar K, \bar L \rangle_F}
     {\|\bar K\|_F \, \|\bar L\|_F},
\]
where $\langle A,B\rangle_F = \mathrm{tr}(A^\top B)$.

\paragraph{CKA with $k$-NN Alignment (CKNNA) \citep{huh2024platonic}} 
Let $\alpha(i,j) \in \{0,1\}$ indicate whether $i,j$ are 
mutual $k$-nearest neighbors in both $X$ and $Y$. 
Define the masked inner product
\[
\langle A,B\rangle_{\alpha} 
= \sum_{i=1}^n \sum_{j=1}^n \alpha(i,j)\, A_{ij} B_{ij}.
\]
CKNNA is then given by
\[
\mathrm{CKNNA}(K,L) \;=\;
\frac{\langle \bar K, \bar L \rangle_{\alpha}}
     {\sqrt{\langle \bar K, \bar K \rangle_{\alpha} \,
            \langle \bar L, \bar L \rangle_{\alpha}}}.
\]

\noindent
When $\alpha(i,j)=1$ for all $i \neq j$, CKNNA reduces to standard CKA.

\section{Additional Results}

\subsection{Representation Preservation}
\label{appx:CKA}

While benchmark accuracy provides an external measure of forgetting, it may conflate genuine loss of capability with superficial effects such as formatting mismatch between tasks.
To assess whether fine-tuning alters the model more fundamentally, we analyzed changes to the model's representations.

\paragraph{Experimental Setup.}
To study how representations change between models, we compare their embeddings on a shared dataset. 
Following prior work, we compare the relative geometry of the embeddings—that is, how different inputs relate to each other. This geometry can be summarized by a kernel (similarity) matrix, which encodes pairwise relationships among input embeddings.
Centered Kernel Alignment (CKA) \citep{kornblith2019similarity} is a standard measure for comparing such kernels, providing a way to quantify representational similarity between models.

\begin{wrapfigure}{r}{0.4\textwidth}
\label{fig:cka}
\centering
\includegraphics[width=.98\linewidth]{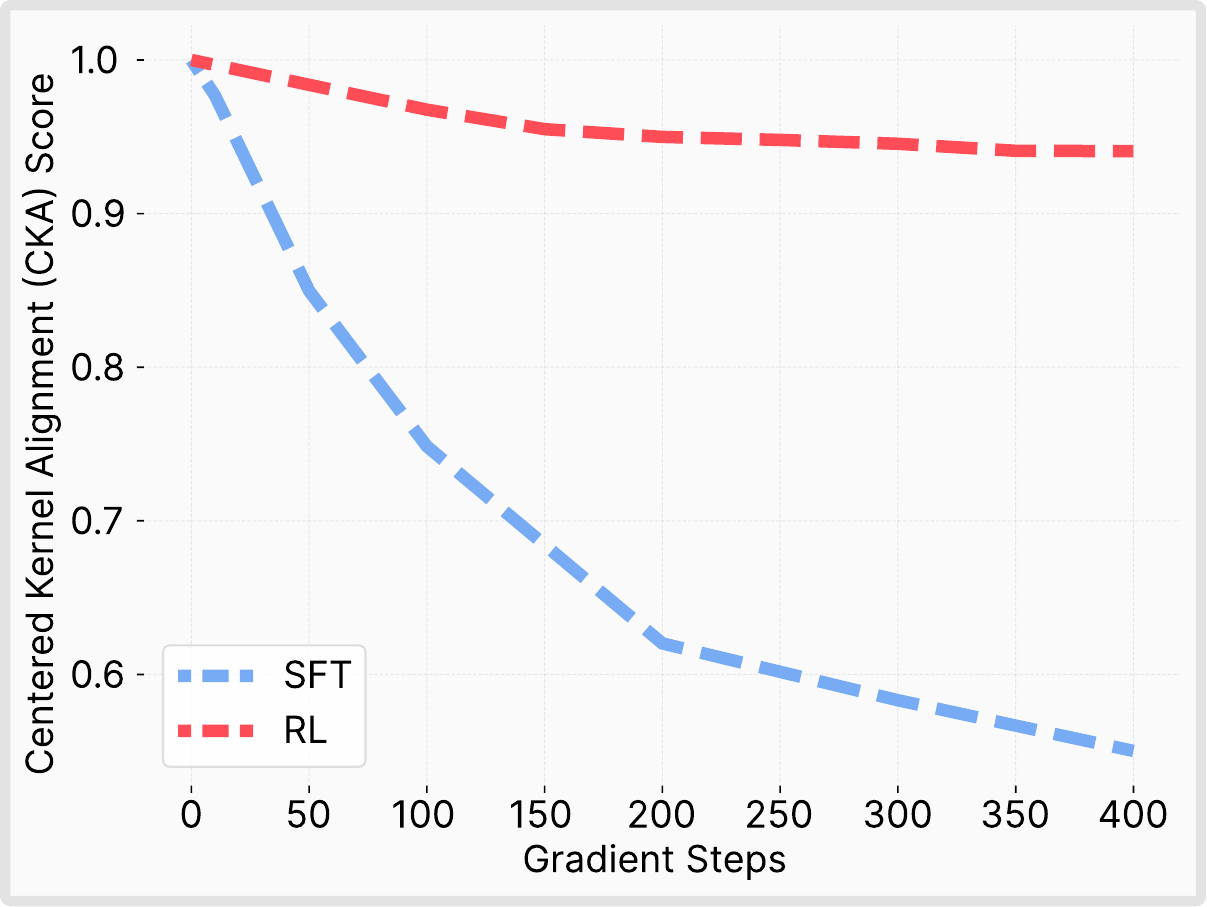}
\caption{\textbf{CKA similarity to the base model during training.} Although SFT and RL achieve comparable task performance, SFT models diverge substantially in their representations, whereas RL models remain more closely aligned with the base model.}
\end{wrapfigure}

For this analysis, we constructed kernels from random Wikipedia paragraphs, ensuring that the probe data are unrelated to the fine-tuning tasks. We then compared the kernels of the base model and its fine-tuned variants using CKNNA \citep{huh2024platonic}, a local-neighborhood variant of CKA (see Appendix~\ref{appx:cka} for details). Comparisons were made between SFT and RL models that achieved similar final accuracy on the new task, isolating representational differences due to training method rather than task performance.

\paragraph{Results.}
Figure~\ref{fig:cka} shows that RL-trained models retain high representational similarity (CKNNA=0.94) to the base model, with CKNNA scores remaining close to one even after fine-tuning on the new task. In contrast, SFT-trained models exhibit substantial representational drift (CKNNA=0.56). These results indicate that RL fine-tuning integrates new abilities while leaving the overall representation space largely intact, whereas SFT alters the geometry more extensively. Together with the benchmark results, this suggests that RL is able to integrate new abilities without disturbing the underlying representational structure, while SFT incurs representational shifts that manifest as catastrophic forgetting.

\subsection{Scaling and Forgetting}
\label{appx:scaling}

Prior work has suggested that catastrophic forgetting diminishes as model size increases \citep{ramasesh2021effect, luo2023empirical, cossu2024continual}. To evaluate this claim in our setting, we repeated the SFT experiments from Section~\ref{sec:rl_vs_sft} using Qwen 2.5 models with 3B, 7B, and 14B parameters on the Science Q\&A task.

\begin{wrapfigure}{r}{0.4\textwidth}
\centering
\includegraphics[width=.98\linewidth]{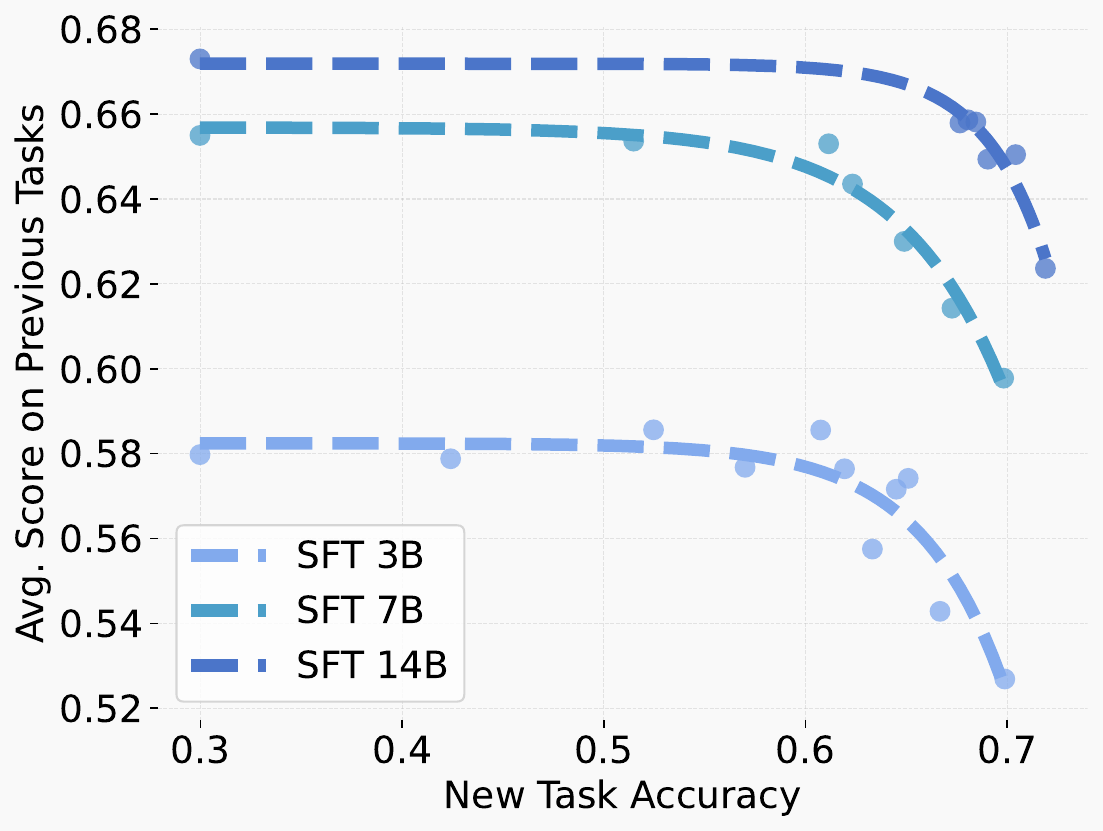}
    \caption{Pareto frontiers for SFT on Qwen 2.5 Instruct models of size 3B, 7B, and 14B on the Science Q\&A task. All sizes exhibit the same fundamental trade-off---gains on the new task require forgetting prior capabilities.}
    \label{fig:SFT_scaling}
    \vspace{-15pt}
\end{wrapfigure}

The results, shown in Figure~\ref{fig:SFT_scaling}, demonstrate that although larger models start with better general capabilities, the trade-off between new-task performance and prior-task retention remains unchanged: across all model sizes, SFT improves new-task accuracy at the expense of forgetting. In particular, to reach high accuracy on the Science Q\&A task, substantial degradation occurs in performance on prior benchmarks regardless of model scale.

\subsection{Optimization Dynamics}
\label{appx:opt_dynamics}

To examine the link between parameter updates and forgetting, we analyzed the optimization trajectory at the level of individual training steps. For each update, we computed two quantities:

\begin{enumerate}[leftmargin=*]
    \item \textbf{Forgetting direction.} Using the FashionMNIST evaluation set, we calculated the gradient of the loss with respect to model parameters. We then measured the cosine similarity between this gradient and the actual parameter update from the training step. A positive cosine indicates that the update increases FashionMNIST loss (catastrophic forgetting), while a negative cosine indicates an update that reduces it.
    
    \item \textbf{KL shift.} We measured the change in KL divergence between the model’s output distributions on the ParityMNIST test set before and after the update.
\end{enumerate}

Plotting per-step KL change against the cosine similarity (Figure \ref{fig:KL_grad_corr}) revealed a strong correlation: steps producing larger KL shifts tended to align more with the forgetting gradient. This analysis demonstrates that at the level of optimization dynamics, catastrophic forgetting is driven by updates that induce larger distributional shifts on the new task.

\begin{figure}[h]
    \centering
    \includegraphics[width=0.5\linewidth]{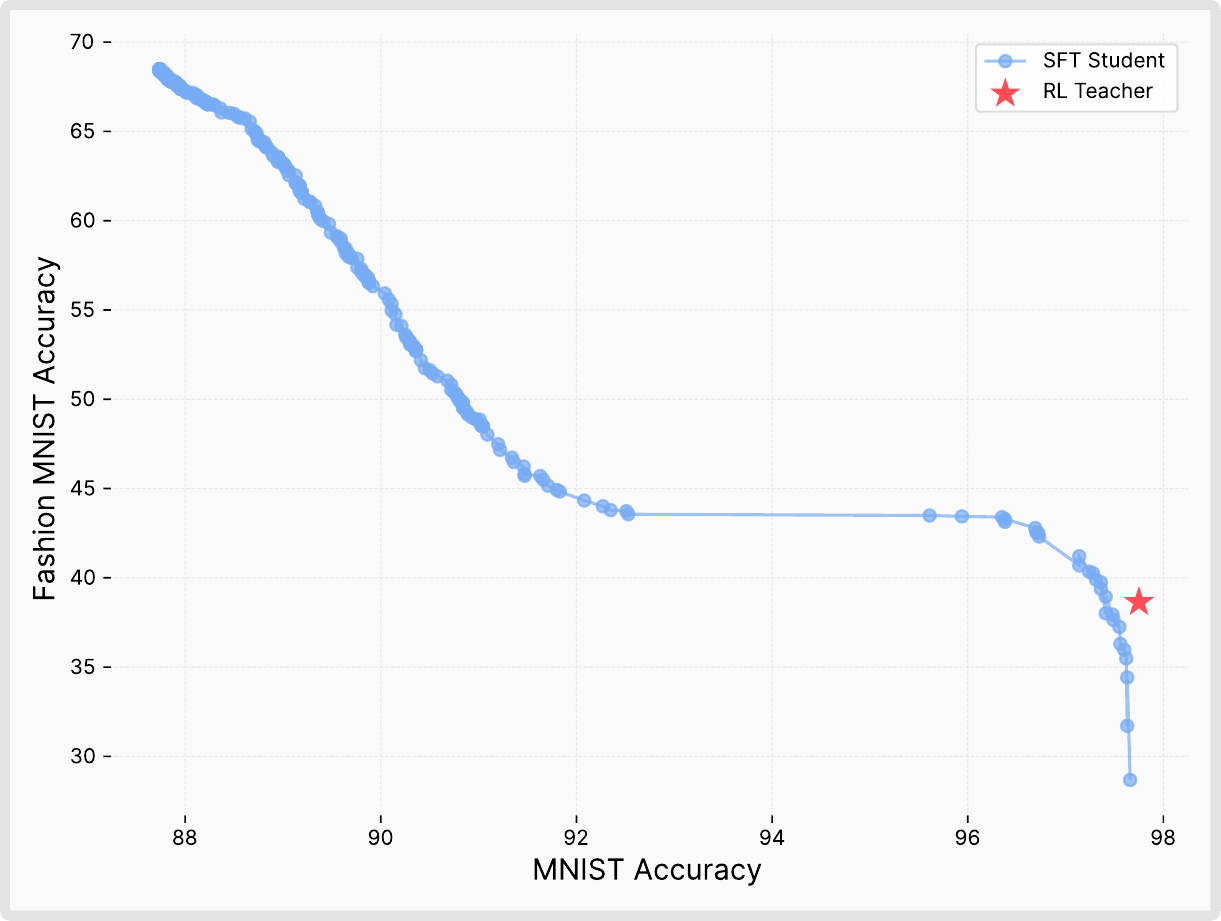}
    \caption{\textbf{SFT distillation from an RL teacher.} 
    Accuracy trade-off between the new task (MNIST) and the prior task (FashionMNIST). Sweeping student hyperparameters shows that SFT can match the teacher within noise on both tasks. This suggests that what matters is not the optimization path, but the distribution of the final model.}

    \label{fig:RL_distilation}
\end{figure}

\begin{figure}[h]
    \begin{minipage}[t]{0.5\textwidth}
        \includegraphics[width=\linewidth]{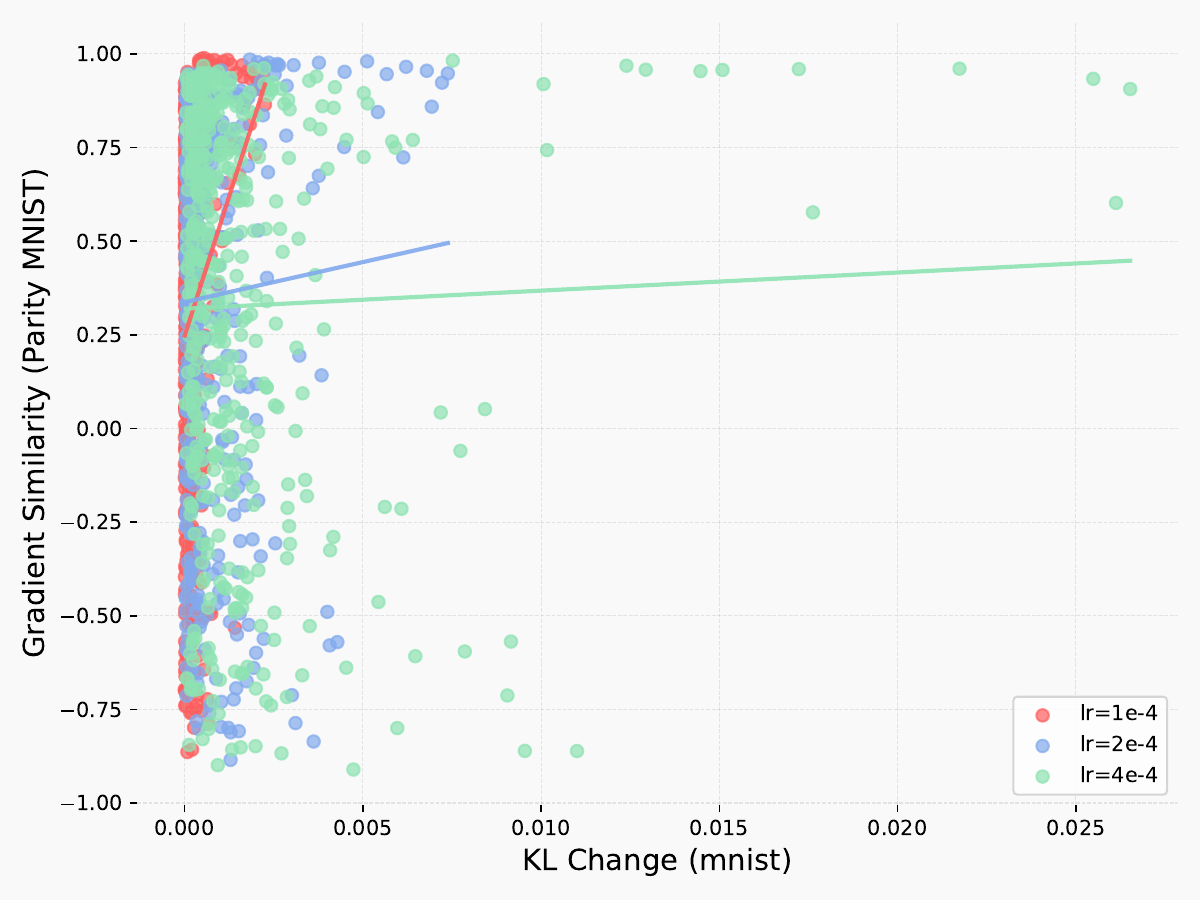}
    \end{minipage}
    \hfill
    \begin{minipage}[t]{0.5\textwidth}
        \includegraphics[width=\linewidth]{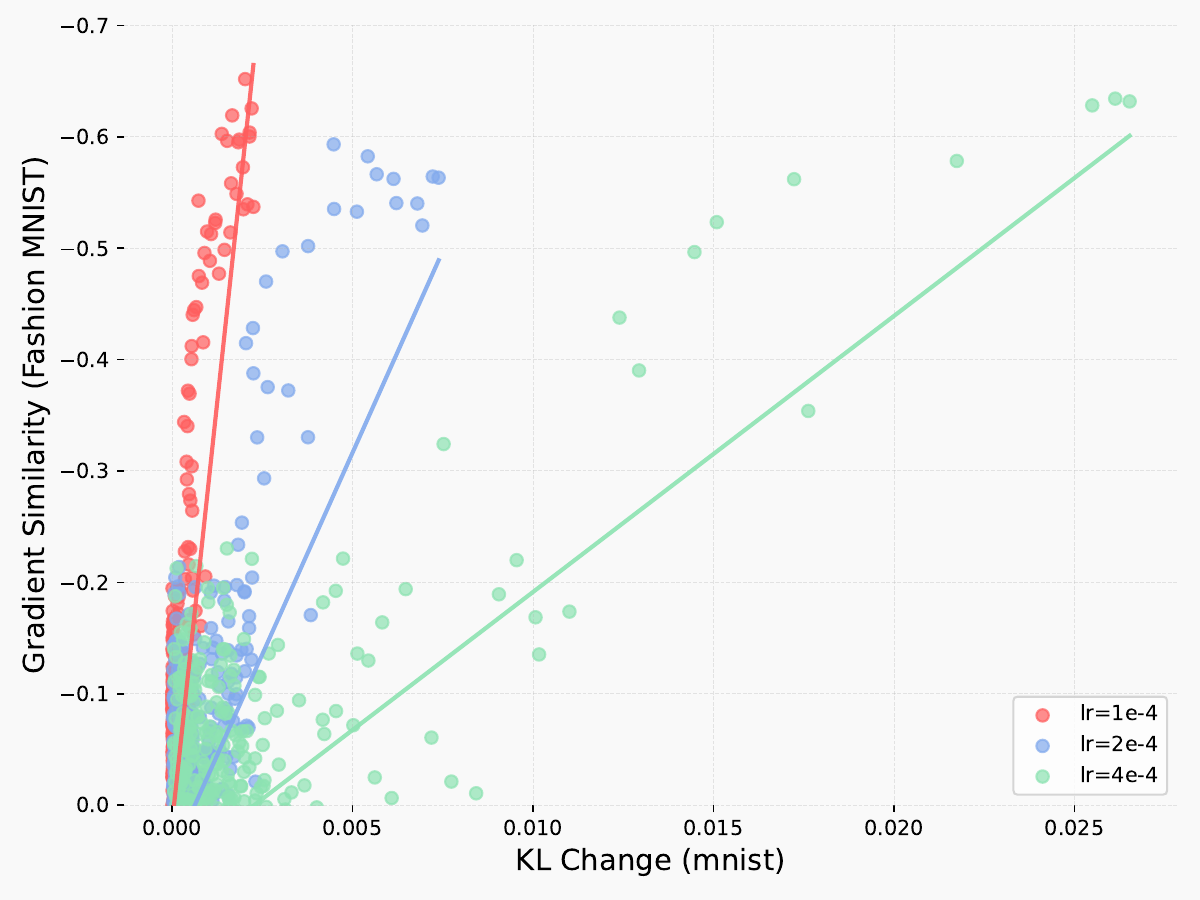}
    \end{minipage}
    \hfill
\caption{\textbf{Gradient similarity versus KL change.} 
(Left) On the new training task (ParityMNIST), gradient cosine similarity and KL change per step remain anti-correlated. 
(Right) On the prior task (FashionMNIST), the gradient similarity is more correlated with the KL change per step on the training task (ParityMNIST).
Together, these plots show that taking a larger step on the current task induces gradients that are more similar in direction to the }

    \label{fig:KL_grad_corr}
\end{figure}

\begin{figure}
    \centering
    \includegraphics[width=0.5\linewidth]{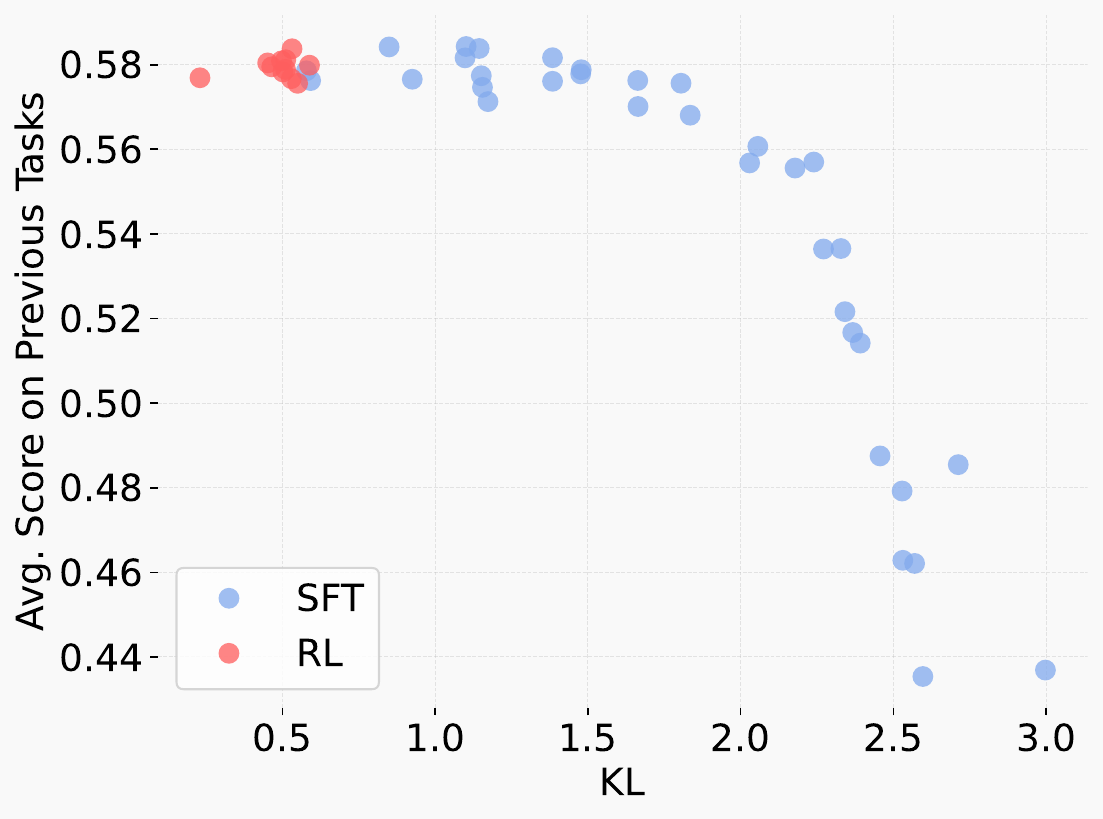}
    \caption{We plot the KL divergence between the base and fine-tuned model on the new task, alongside the corresponding forgetting performance across methods.}
    \label{fig:kl_llms}
\end{figure}

\end{document}